%% file: arxiv.tex
\documentclass{article}

\usepackage[preprint]{neurips_2026}

\usepackage[utf8]{inputenc}
\usepackage[T1]{fontenc}
\usepackage{graphicx}
\usepackage{needspace}
\usepackage{placeins}
\usepackage{wrapfig}
\usepackage{hyperref}
\usepackage{url}
\usepackage{booktabs}
\usepackage{array}
\usepackage{amsfonts}
\usepackage{nicefrac}
\usepackage{microtype}
\usepackage{xcolor}
\usepackage{algorithm}
\usepackage{algpseudocode}
\usepackage{amsmath, amssymb}
\usepackage{amsthm}
\usepackage{todonotes}
\usepackage{cleveref}
\usepackage[most]{tcolorbox}

\DeclareMathOperator{\polar}{polar}
\DeclareMathOperator{\FreshGNS}{FreshGNS}
\DeclareMathOperator{\CacheGNS}{CacheGNS}

\tcbset{
  theorembox/.style={
    enhanced,
    breakable,
    colback=gray!5,
    colframe=gray!30,
    boxrule=0.3pt,
    arc=1.5pt,
    left=5pt,
    right=5pt,
    top=5pt,
    bottom=5pt,
    before skip=6pt,
    after skip=6pt,
  }
}

\newtheorem{theorem}{Theorem}[section]
\newtheorem{proposition}[theorem]{Proposition}
\newtheorem{lemma}[theorem]{Lemma}
\theoremstyle{remark}
\newtheorem{remark}[theorem]{Remark}

\tcolorboxenvironment{theorem}{theorembox}
\tcolorboxenvironment{proposition}{theorembox}
\tcolorboxenvironment{lemma}{theorembox}

\title{CacheMuon: Using Temporal Preconditioning To Approximate Polar Factor}

\author{%
  Bishnu Dev \\
  Mohamed bin Zayed University\\
  of Artificial Intelligence\\
  \texttt{bishnu.dev@mbzuai.ac.ae}
  \And
  Sushil Bohara \\
  Mohamed bin Zayed University\\
  of Artificial Intelligence\\
  \texttt{sushil.bohara@mbzuai.ac.ae}
  \AND
  Martin Tak\'a\v{c} \\
  Mohamed bin Zayed University\\
  of Artificial Intelligence\\
  \texttt{martin.takac@mbzuai.ac.ae}
  \And
  Samuel Horv\'ath \\
  Mohamed bin Zayed University\\
  of Artificial Intelligence\\
  \texttt{samuel.horvath@mbzuai.ac.ae}
}

\begin{document}

\maketitle

\begin{abstract}
  Muon is an optimizer that computes updates using the polar factor of the momentum matrix and has shown strong empirical performance across a range of training settings. A key component of Muon is the Newton--Schulz iteration used to compute this polar factor. Although this avoids the cost of an exact singular value decomposition, it remains expensive in practice because it is applied at every optimization step. At the same time, the momentum matrix changes smoothly over training, suggesting strong temporal correlation in the corresponding polar factors. In this paper, we exploit this structure and propose CacheMuon, a temporal preconditioning method that reuses information from previous optimization steps to approximate the polar factor at the current step. This reduces redundant orthogonalization computation across iterations. We analyze CacheMuon as an inexact Muon update, with error controlled by fresh-solver error and cache staleness. Empirically, CacheMuon provides a controllable quality--efficiency frontier: conservative thresholds closely match fresh Muon on language-model and vision training while reducing orthogonalization FLOPs, whereas more aggressive thresholds yield larger arithmetic savings at the cost of modest validation-quality degradation.
\end{abstract}

\input{sections/introduction}
\input{sections/related_work.tex}
\input{sections/method}
\input{sections/convergence_analysis}
\input{sections/complexity_implementation.tex}
\input{sections/numerical_experiments.tex}
\input{sections/discussion_and_limitation.tex}
\input{sections/conclusion.tex}

\bibliographystyle{plainnat}
\bibliography{references}

\clearpage
\appendix
\input{sections/appendix1}
\clearpage
\input{sections/appendix2.tex}
\clearpage
\input{sections/appendix3.tex}

\end{document}

%% file: sections/introduction.tex
\section{Introduction}
\label{sec:introduction}
Muon is a geometry-aware optimizer for matrix-valued hidden-layer parameters that replaces standard coordinate-wise updates such as Adam and AdamW with an orthogonalized momentum direction \citep{adam,adamW,muon}. Given a matrix momentum update, Muon applies an approximate polar-factor computation and uses the resulting semi-orthogonal matrix as the descent direction. This matrix-aware update has shown strong empirical performance, and recent work has demonstrated that Muon remains effective beyond the small-model regime and can scale to large language model training \citep{muon_scalable}.

The main practical bottleneck in Muon is the orthogonalization step. In modern implementations, the exact polar factor is approximated using a small number of Newton--Schulz iterations, which avoid an explicit singular value decomposition but still require several matrix multiplications at each optimizer step \citep{muon}. As a result, a nontrivial portion of Muon's cost is spent repeatedly solving closely related matrix orthogonalization problems throughout training. Recent work has focused on improving the efficiency of a \emph{single} orthogonalization call, for example through optimized low-accuracy matrix-sign iterations such as PolarExpress and through hardware-aware reformulations such as Gram Newton--Schulz \citep{polar_express,gram_newton_schulz}. These advances make each call cheaper, but they still recompute the orthogonalizing transform from scratch at every step. Moreover, because Muon's update relies on a matrix-level orthogonalization that couples all entries of a parameter matrix, it is less amenable to efficient parallelization than coordinate-wise optimizers. As a result, the orthogonalization cost is both repeated and difficult to distribute, motivating approaches that reduce the number of such computations across training.

In this work, we ask whether Muon's orthogonalization can also be improved \emph{across time}. Our starting point is the observation that adjacent momentum matrices often evolve smoothly during training, especially once optimization has entered a stable regime. If the current momentum matrix is close to the previous one, then the transformation computed at the previous step should remain informative for the current polar-factor approximation.

Motivated by this idea, we propose \emph{CacheMuon}, a Muon variant that reuses a previously computed orthogonalization transform across optimization steps. At a high level, the method maintains a cached left transform produced by Gram Newton--Schulz at a previous step and applies it to the current normalized momentum matrix. To determine whether the cached transform remains sufficiently accurate, we introduce a residual-based reuse trigger that measures the orthogonality error of the candidate update and selectively refreshes the cache by running a fresh Gram Newton--Schulz solve. Unlike prior improvements that focus only on the inner polynomial iteration or the matrix-multiplication kernel, our method introduces \emph{temporal reuse} into Muon's orthogonalization pipeline. Its purpose is to reduce the amount of orthogonalization work performed over training by avoiding unnecessary full correction passes.

Temporal reuse changes the update direction returned by Muon, so the main question is not only computational but also algorithmic: does Muon still converge when the orthogonalization is produced by a cached transform rather than a fresh solve at every step? We answer this by viewing CacheMuon as an \emph{inexact Muon} method. Concretely, we show that the cached direction differs from the exact Muon direction by an additive approximation error controlled by two quantities: the error of the fresh orthogonalization routine and the staleness of the cached Gram matrix relative to the current one. This allows the cached update to be analyzed within the recent theory of practical and inexact Muon updates, yielding a stochastic nonconvex convergence guarantee when the fresh solver error and cache staleness remain controlled \citep{muon_ns_convergence,muon_inexact,norm_constrained_lmo}.

Empirically, we evaluate CacheMuon along three complementary axes: end-to-end training quality, reductions in orthogonalization FLOPs, and the effectiveness of the residual-based reuse trigger as a proxy for cached-update fidelity. We show that low residual corresponds to strong agreement between cached and fresh orthogonalization directions, making the trigger an effective proxy for cached-update fidelity and allowing it to selectively reuse updates that remain close to fresh Muon direction. Across language-model and vision experiments, conservative cache reuse closely tracks fresh-Muon training behavior while reducing orthogonalization FLOPs, whereas more aggressive reuse yields larger arithmetic savings at the cost of modest validation-quality degradation. Our contributions are the following:
\begin{itemize}
    \item We introduce \emph{CacheMuon}, a Muon variant that exploits temporal smoothness in the momentum matrix by reusing a cached orthogonalization transform across optimization steps.
    \item We show that CacheMuon can be analyzed as an \emph{inexact Muon} method and derive a stochastic convergence guarantee under controlled fresh-solver error and cache staleness.
    \item We propose a residual-based reuse trigger that determines when a cached transform remains sufficiently accurate and when a fresh Muon direction should be recomputed.
    \item We empirically demonstrate across language-model and vision workloads that temporal reuse can substantially reduce orthogonalization FLOPs while closely tracking fresh Muon's training behavior under conservative reuse, yielding a controllable quality--efficiency frontier.
\end{itemize}

%% file: sections/related_work.tex
\section{Related Work}

\paragraph{Muon and efficient pretraining.}
Muon updates matrix-valued hidden-layer parameters by replacing a coordinate-wise adaptive update
with an orthogonalized momentum direction computed through an approximate polar factor
\citep{muon}. Its empirical performance has motivated recent work on using Muon for large-scale
language-model training \citep{muon_scalable, muon_pretraining_efficiency}. This scaling regime
makes the cost of orthogonalization important: although Newton--Schulz iterations avoid an exact
SVD, they still require multiple matrix multiplications at every optimizer step. CacheMuon is
motivated by this practical bottleneck. Rather than changing the outer optimizer, it targets the
repeated polar-factor computation inside Muon.

\paragraph{Faster polar-factor and Newton--Schulz computation.}
The closest algorithmic work to ours improves the cost or quality of a single fresh
orthogonalization call. PolarExpress designs optimized low-accuracy matrix-sign iterations for
Muon \citep{polar_express}, while Gram Newton--Schulz rewrites the computation around the
smaller Gram matrix and hardware-efficient symmetric operations \citep{gram_newton_schulz}.
Other recent approaches also accelerate the Newton--Schulz orthogonalization itself: Chebyshev-type
polynomial schemes choose improved iteration polynomials for approximate orthogonalization
\citep{chebyshev_ns}, and Turbo-Muon uses a preconditioning procedure to accelerate
Newton--Schulz convergence and reduce the number of iterations required in practice
\citep{turbo_muon}. These methods are complementary to CacheMuon. They make a fresh
orthogonalization call cheaper; CacheMuon reduces how often a full orthogonalization is
computed by reusing information across nearby optimization steps.

\paragraph{Approximate and inexact Muon.}
Practical Muon implementations use a small finite number of Newton--Schulz iterations rather than
the exact SVD polar factor. Recent theory has begun to close the gap between idealized Muon and
these approximate implementations. \citet{muon_ns_convergence} analyze Muon with finite-step Newton--Schulz
orthogonalization, while \citet{muon_inexact} study inexact Muon updates
under an additive approximation-error model. Earlier and complementary
analyses interpret Muon as a spectral-norm steepest-descent method or as a non-Euclidean
trust-region method \citep{muon_convergence_note, gradient_orthogonalization_trust_region}.
Our analysis follows the practical/inexact viewpoint: the cached update differs from fresh Muon
through an additive error controlled by the fresh solver error and the staleness of the cached transform.

\paragraph{Adaptive, preconditioned, and large-scale Muon variants.}
A rapidly growing line of work modifies Muon's update rule to improve robustness, adaptivity, or
large-scale throughput. AdaMuon, AdaGO, NorMuon, NAMO, and Muon$^2$ combine orthogonalized
updates with adaptive stepsizes, second-moment information, or neuron-wise normalization
\citep{adamuon, adago, normuon, namo, muon2}. MuonClip was introduced in the Kimi K2 technical
report to stabilize very large-scale MoE training with a Muon-style optimizer \citep{kimi_k2}.
Distributed and tensorized variants address the systems cost of orthogonalized updates at scale:
Dion replaces Newton--Schulz with amortized power iteration for sharded training
\citep{dion}, MuonBP uses block-periodic orthogonalization to reduce communication overhead
\citep{muonbp}, and TEON generalizes layer-wise Muon to tensorized orthonormalization across
structured gradients \citep{teon}. CacheMuon is different from these variants: it does not introduce
adaptive rescaling, change the layer aggregation rule, or target distributed sharding directly. It keeps
the Muon update geometry fixed and amortizes the orthogonalization oracle over time.

\paragraph{Geometry-aware and LMO-based optimization.}
Muon is also connected to a broader family of layer-wise geometry-aware methods. Some works
explain Muon through spectral-norm constraints or stochastic Frank--Wolfe perspectives
\citep{muon_spectral_constraints, lions_muons}. Scion formulates deep-learning optimization
through norm-constrained linear minimization oracles \citep{norm_constrained_lmo}, and Gluon
connects Muon and Scion within a common layer-wise LMO-based framework \citep{gluon}.
Non-Euclidean formulations further study how different norm choices, aggregation rules, and
normalizations lead to Muon variants \citep{muon_nongeuclidean_variants}. These works clarify
the geometry of the update direction. CacheMuon is orthogonal to them: it assumes the same
Muon-style polar direction and asks whether the computation of that direction can be reused across
successive optimizer steps.

%% file: sections/method.tex
\section{Method}

\label{sec:method}

Muon updates matrix-valued parameters by orthogonalizing a momentum matrix before applying the step \citep{muon}. Let $W_t \in \mathbb{R}^{n\times m}$ be a matrix parameter and let
\[
g_t = \nabla f(W_t;\xi_t)
\]
denote the stochastic gradient at iteration $t$. Throughout, we use the momentum recursion
\[
M_t = (1-\beta) g_t + \beta M_{t-1},
\]
where $\beta \in [0,1)$. If $M_t = U_t \Sigma_t V_t^\top$ is the reduced singular value decomposition of $M_t$, then the exact Muon direction is its polar factor
\[
D_t = \polar(M_t) = U_t V_t^\top.
\]
Computing $D_t$ exactly by singular value decomposition at every training step is too expensive, so Muon instead approximates the polar factor using a small number of Newton--Schulz (\Cref{alg:standard-ns}) iterations \citep{muon}. These iterations preserve the singular vectors of the input while driving the nonzero singular values toward one.

\subsection{Gram Newton--Schulz}
In this work, we build on Gram Newton--Schulz which reformulates the standard Newton--Schulz (\Cref{alg:standard-ns}) used in original Muon in terms of the smaller symmetric Gram matrix \citep{gram_newton_schulz,polar_express,muon}. Let
\[
X_t = \frac{M_t}{\|M_t\|_F + \varepsilon}, \qquad
X_t^{(0)} = X_t, \qquad
R_t^{(0)} = X_t X_t^\top, \qquad
Q_t^{(0)} = I.
\]
At iteration $k$, define
\[
Z_t^{(k)} = b_k R_t^{(k-1)} + c_k \bigl(R_t^{(k-1)}\bigr)^2,
\qquad
P_t^{(k)} = a_k I + Z_t^{(k)}.
\]
The iteration updates
\[
Q_t^{(k)} = Q_t^{(k-1)} P_t^{(k)},
\qquad
R_t^{(k)} = P_t^{(k)} R_t^{(k-1)} P_t^{(k)},
\]
and returns
\[
X_t^{(K)} = Q_t^{(K)} X_t^{(0)}.
\]

In exact arithmetic, this Gram formulation is equivalent to standard Newton--Schulz, but in low precision, it can become unstable. We therefore use the stabilized Gram Newton--Schulz (\Cref{alg:gns}) variant with restarts at selected inner iterations $k \in \mathcal S \subseteq \{1,\dots,K-1\}$ \citep{gram_newton_schulz}: at such a restart index, set $X_t \gets Q_t^{(k)} X_t$, rebuild $R_t = X_t X_t^\top$, reset $Q_t \gets I$, and continue the local recurrence.

\subsection{Cache Gram Newton--Schulz}

Both standard Newton--Schulz and Gram Newton--Schulz recompute an orthogonalizing transform from scratch at every optimization step. While this avoids explicit singular value decompositions, it still incurs a nontrivial cost due to repeated matrix multiplications. At the same time, the momentum matrix $M_t$ typically evolves smoothly over training, so the corresponding orthogonalization problems at adjacent steps are closely related. This suggests that recomputing the transform independently at each step may be redundant.

A key property of stabilized Gram Newton--Schulz is that it can be modified to produce not only an orthogonalized matrix, but also a left transform $Q_t$ such that $\polar(X_t) \approx \tilde D_t = Q_t X_t$. In particular, at every restart, we accumulate the left transform computed on the current segment before rebuilding the Gram matrix and resetting the working factor to \(I\). We call this resulting algorithm Fresh Gram Newton--Schulz (FreshGNS, \Cref{alg:fresh-gns}). The transform $Q_t$ can be interpreted as an approximate orthogonalizing operator for the current momentum matrix. Consequently, when the momentum matrix changes gradually over training, the orthogonalizing operator computed at one step should continue to approximately orthogonalize nearby iterates. Our approach is to cache this operator and reuse it across iterations, thereby amortizing the cost of orthogonalization over time.

Let $r_t \in \{0,1\}$ denote whether step $t$ is a refresh step, with $r_1 = 1$, and define the anchor index
\[
a_t := \max\{s \le t : r_s = 1\}.
\]
If $r_t = 1$, we run Fresh Gram Newton--Schulz (FreshGNS, \Cref{alg:fresh-gns}) and obtain
\[
\tilde D_t, Q_t = \FreshGNS(M_t),
\qquad
\hat D_t = \tilde D_t.
\]
If $r_t = 0$, we reuse the cached transform $Q_{a_t}$ and compute
\[
X_t = \frac{M_t}{\|M_t\|_F + \varepsilon},
\qquad
\hat D_t = Q_{a_t} X_t.
\]

We call the resulting method Cache Gram Newton--Schulz (CacheGNS, \Cref{alg:cache-gns}). The algorithm maintains a cached orthogonalizing operator that is refreshed using FreshGNS on refresh steps ($r_t = 1$) and reused on intermediate optimization steps ($r_t = 0$) to approximately orthogonalize the momentum matrix. In practice, the refresh decisions $r_t$ are determined by a residual-based reuse heuristic described later in \Cref{sec:complexity_implementation}.

\subsection{CacheMuon}
CacheMuon changes only the orthogonalization step of Muon: instead of recomputing the polar factor of the momentum matrix at every iteration, it reuses the most recent cache transform to approximate the polar factor whenever the current step is not refreshed. The resulting optimizer is given by
\[
g_t = \nabla f(W_t;\xi_t),
\qquad
M_t = (1-\beta) g_t + \beta M_{t-1},
\]
\[
(\hat D_t, Q_t)
=
\CacheGNS(M_t),
\]
\[
W_{t+1} = W_t - \eta \hat D_t.
\]
where
\[
\CacheGNS(M_t)
=
\begin{cases}
\FreshGNS(M_t), & r_t = 1,\\[4pt]
(Q_{a_t}X_t,\; Q_{a_t}), & r_t = 0.
\end{cases}
\]

When $r_t = 1$ for every step, CacheMuon reduces to Muon. When refreshes are less frequent, the method amortizes orthogonalization across time by reusing the previously computed cache.

%% file: sections/convergence_analysis.tex
\section{Convergence Analysis}

\label{sec:convergence}

Using the notation from \Cref{sec:method}, let
\[
X_t := \frac{M_t}{\|M_t\|_F + \varepsilon},
\qquad
A_t := X_t X_t^\top
\]
denote the normalized momentum matrix and its Gram matrix, respectively. Let $\mathcal Q(A)$ denote the cumulative left transform returned by the Fresh Gram Newton--Schulz (\Cref{alg:fresh-gns}) solver when applied with Gram matrix $A$ under the fixed coefficients and restart schedule from \Cref{sec:method}. Let
\[
F(W) := \mathbb E_{\xi}[f(W;\xi)]
\]
denote the expected objective. We write
\[
\tilde D_t := \mathcal Q(A_t) X_t
\]
for the direction obtained by running a Fresh Gram Newton--Schulz solve at step $t$, and
\[
\hat D_t := \mathcal Q(A_{a_t}) X_t = Q_{a_t} X_t
\]
for the cached direction actually used by the optimizer. Finally, define the momentum-tracking error
\[
e_t := \|M_t - \nabla F(W_t)\|_{\mathrm{tr}}.
\]

\paragraph{Assumptions.}
We assume the following throughout.

\textbf{(A1)} $F$ is $L$-smooth in the spectral/nuclear geometry:
\[
F(Y) \le F(X) + \langle \nabla F(X), Y-X\rangle + \frac{L}{2}\|Y-X\|_{\mathrm{sp}}^2,
\]
and
\[
\|\nabla F(Y)-\nabla F(X)\|_{\mathrm{tr}} \le L\|Y-X\|_{\mathrm{sp}}.
\]
\textbf{(A2)} The stochastic gradient is unbiased with bounded variance:
\[
g_t = \nabla F(W_t)+\zeta_t,
\qquad
\mathbb E[\zeta_t\mid \mathcal F_t]=0,
\qquad
\mathbb E[\|\zeta_t\|_F^2\mid \mathcal F_t]\le \sigma^2.
\]
\textbf{(A3)} The Fresh Gram Newton--Schulz solver has uniform approximation error:
\[
\|\tilde D_t-D_t\|_{\mathrm{sp}} \le \varepsilon_{\mathrm{ref}}
\qquad \text{for all } t.
\]
\textbf{(A4)} The cached Gram matrix remains within bounded staleness of the current one:
\[
\tau_t := \|A_t-A_{a_t}\|_{\mathrm{sp}},
\qquad
\bar\tau := \sup_t \tau_t < \infty.
\]

Assumptions \textbf{(A1)} \& \textbf{(A2)} are standard in recent stochastic analyses of Muon and inexact Muon updates \citep{norm_constrained_lmo,gluon,muon_inexact,fedmuon}. The additional requirements, \textbf{(A3)} \& \textbf{(A4)}, arise from our setting: \textbf{(A3)} aligns with prior analysis of finite-step orthogonalization error (\citep{polar_express}), whereas \textbf{(A4)} is unique to this work, capturing temporal reuse via bounded Gram staleness.

The next proposition shows that the effect of caching can be summarized by a single inexactness term.

\begin{proposition}[Cache reuse induces controlled inexactness]
\label{prop:cache-inexact}
Let
\[
\mathcal D := \{A\in\mathbb R^{n\times n}: A=A^\top,\ A\succeq 0,\ \|A\|_{\mathrm{sp}}\le 1\}.
\]
Then $A_t\in\mathcal D$ for all $t$. Moreover, for fixed coefficients and a fixed restart schedule, the Fresh Gram Newton--Schulz routine induces a matrix polynomial map $A\mapsto \mathcal Q(A)$ on $\mathcal D$. Consequently, there exists a constant $L_Q>0$ such that
\[
\|\mathcal Q(A)-\mathcal Q(B)\|_{\mathrm{sp}}
\le
L_Q \|A-B\|_{\mathrm{sp}},
\qquad
A,B\in\mathcal D.
\]
Hence
\[
\|\hat D_t-\tilde D_t\|_{\mathrm{sp}}
\le
L_Q \tau_t,
\]
and therefore, with
\[
\delta_t := L_Q \tau_t + \varepsilon_{\mathrm{ref}},
\]
we have
\[
\|\hat D_t-D_t\|_{\mathrm{sp}} \le \delta_t
\qquad \text{for all } t.
\]
\end{proposition}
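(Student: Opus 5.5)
The proof has four steps, taken in the order of the statement.

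\textbf{Step 1: $A_t\in\mathcal D$.} By construction $A_t = X_tX_t^\top$ is symmetric and positive semidefinite. Moreover
\[
\|A_t\|_{\mathrm{sp}}=\|X_t\|_{\mathrm{sp}}^2\le \|X_t\|_F^2=\frac{\|M_t\|_F^2}{(\|M_t\|_F+\varepsilon)^2}\le 1 .
\]

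\textbf{Step 2: $\mathcal Q$ is a polynomial map on $\mathcal D$.} This is the step that needs the most care, because of the restarts. I will argue by induction over restart segments. Within one segment, starting from a Gram matrix $R$, each inner step forms $P=a_kI+b_kR+c_kR^2$ and then $R\gets PRP$. Since $P$ is a polynomial in $R$, it commutes with $R$, so the updated $R$ is again a polynomial in the segment's starting Gram matrix. The accumulated segment transform $Q$ is a product of such polynomials, hence also a polynomial.

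At a restart the algorithm sets $X\gets QX$, so the new Gram matrix is $QXX^\top Q^\top = QRQ$, using that $Q$ is symmetric because it is a polynomial in $R$. This is again a polynomial in $R$. By induction on the segments, and recalling from the first segment that $R=A$ because $R_t^{(0)}=X_tX_t^\top=A_t$, every Gram matrix and every segment transform is a fixed polynomial in $A$. The cumulative transform $\mathcal Q(A)$ is the product of the segment transforms, so it equals $p(A)$ for a fixed scalar polynomial $p$ determined by the coefficients and the restart schedule. The value of $\varepsilon$ is irrelevant here, since the map is defined on Gram matrices directly.

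\textbf{Step 3: Lipschitz bound.} Write $p(x)=\sum_j \gamma_j x^j$. For $A,B\in\mathcal D$, use the telescoping identity
\[
A^j-B^j=\sum_{i=0}^{j-1}A^i(A-B)B^{j-1-i}.
\]
Together with $\|A\|_{\mathrm{sp}},\|B\|_{\mathrm{sp}}\le1$, this gives $\|A^j-B^j\|_{\mathrm{sp}}\le j\|A-B\|_{\mathrm{sp}}$. Hence we may take
\[
L_Q:=\sum_j j|\gamma_j|,
\]
replacing it by $\max(L_Q,1)$ if needed so that it is positive. Using only operator-norm submultiplicativity avoids any commutativity issue between $A$ and $B$.

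\textbf{Step 4: Conclusion.} The cached and fresh directions satisfy $\hat D_t-\tilde D_t=(\mathcal Q(A_{a_t})-\mathcal Q(A_t))X_t$. Since $\|X_t\|_{\mathrm{sp}}\le1$, Step 3 gives
\[
\|\hat D_t-\tilde D_t\|_{\mathrm{sp}}\le L_Q\tau_t .
\]
Combining this with (A3) through the triangle inequality yields
\[
\|\hat D_t-D_t\|_{\mathrm{sp}}\le L_Q\tau_t+\varepsilon_{\mathrm{ref}}=\delta_t .
\]
On refresh steps $a_t=t$, so $\tau_t=0$ and the bound holds trivially.
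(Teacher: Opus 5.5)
Your proof is correct and follows essentially the same route as the paper. Both arguments use $\|X_t\|_{\mathrm{sp}}\le\|X_t\|_F\le 1$ for membership in $\mathcal D$, an induction showing every Gram matrix and transform is a polynomial in $A$, the telescoping bound $L_Q=\sum_j j|\gamma_j|$, and the factorization $\hat D_t-\tilde D_t=(\mathcal Q(A_{a_t})-\mathcal Q(A_t))X_t$ combined with (A3); the only differences are that you induct over restart segments where the paper tracks the cumulative transform $S_k$ per inner iteration with $R_k=S_kAS_k$, and your $\max(L_Q,1)$ safeguard guarantees $L_Q>0$, which the paper's stated choice does not.
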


\begin{remark}
Proposition~\ref{prop:cache-inexact} isolates the cost of caching into two quantities: the fresh-solver error $\varepsilon_{\mathrm{ref}}$ and the Gram staleness $\tau_t$.
\end{remark}

\begin{theorem}[Stochastic convergence of CacheMuon]
\label{thm:main}
Under Assumptions \textbf{(A1)}--\textbf{(A4)}, suppose that
\[
\bar\delta := L_Q \bar\tau + \varepsilon_{\mathrm{ref}} < 1.
\]
Then for every integer $N\ge 1$,
\[
\begin{aligned}
\frac{1}{N}\sum_{t=1}^{N}\mathbb E\|\nabla F(W_t)\|_{\mathrm{tr}}
&\le
\frac{1}{1-\bar\delta}
\Bigg[
\frac{\Delta_1}{N\eta}
+ \frac{2\mathbb E[e_1]}{N(1-\beta)} 
+ \frac{2\beta L\eta(1+\bar\delta)}{1-\beta}
+ 2\rho \sigma \sqrt{1-\beta}\\
&\quad
+ \frac{L\eta}{2}(1+\bar\delta)^2
\Bigg]
\end{aligned}
\]
where
\[
\Delta_1 := F(W_1)-F_*,
\qquad
F_* := \inf_W F(W).
\]
Here \(\rho\) denotes a norm-equivalence constant satisfying
\(\|Z\|_{\mathrm{tr}} \le \rho \|Z\|_F\).
In particular, choosing
\[
\eta = \Theta(N^{-3/4}),
\qquad
1-\beta = \Theta(N^{-1/2})
\]
yields
\[
\frac{1}{N}\sum_{t=1}^{N}\mathbb E\|\nabla F(W_t)\|_{\mathrm{tr}}
=
O\!\left(\frac{1}{(1-\bar\delta)N^{1/4}}\right).
\]
\end{theorem}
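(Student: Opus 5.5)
We prove Theorem~\ref{thm:main} with the standard inexact-LMO descent argument for momentum methods, taking Proposition~\ref{prop:cache-inexact} as the starting point. Since $\|\hat D_t - D_t\|_{\mathrm{sp}}\le\delta_t\le\bar\delta$ and $\|D_t\|_{\mathrm{sp}}\le 1$, we get $\|\hat D_t\|_{\mathrm{sp}}\le 1+\bar\delta$.

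\textbf{One-step descent.} Apply (A1) to $W_{t+1}=W_t-\eta\hat D_t$:
\[
F(W_{t+1})\le F(W_t)-\eta\langle\nabla F(W_t),\hat D_t\rangle+\tfrac{L\eta^2}{2}(1+\bar\delta)^2 .
\]
Split the inner product as $\langle \nabla F(W_t),\hat D_t\rangle=\langle M_t,D_t\rangle+\langle M_t,\hat D_t-D_t\rangle+\langle \nabla F(W_t)-M_t,\hat D_t\rangle$. By the polar-factor identity, $\langle M_t,D_t\rangle=\|M_t\|_{\mathrm{tr}}$. Trace/spectral duality bounds the second term by $\bar\delta\|M_t\|_{\mathrm{tr}}$ and the third by $(1+\bar\delta)e_t$. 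Hence
\[
\langle \nabla F(W_t),\hat D_t\rangle\ge(1-\bar\delta)\|M_t\|_{\mathrm{tr}}-(1+\bar\delta)e_t .
\]
Using $\|M_t\|_{\mathrm{tr}}\ge\|\nabla F(W_t)\|_{\mathrm{tr}}-e_t$, this becomes
\[
\langle \nabla F(W_t),\hat D_t\rangle\ge(1-\bar\delta)\|\nabla F(W_t)\|_{\mathrm{tr}}-2e_t .
\]
The coefficient on $e_t$ is exactly $2$, which matches the form of the stated bound. Telescoping over $t$, dividing by $N\eta(1-\bar\delta)$, and using $F(W_{N+1})\ge F_*$ yields the $\Delta_1/(N\eta)$ and $\tfrac{L\eta}{2}(1+\bar\delta)^2$ terms, plus an extra term $\tfrac{2}{N}\sum_t\mathbb E e_t$.

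\textbf{Momentum error recursion (main obstacle).} Write $M_t-\nabla F(W_t)=\beta\bigl(M_{t-1}-\nabla F(W_{t-1})\bigr)+\beta\bigl(\nabla F(W_{t-1})-\nabla F(W_t)\bigr)+(1-\beta)\zeta_t$. Unroll this to
\[
E_t=\beta^{t-1}E_1+\sum_{s}\beta^{t-s+1}\bigl(\nabla F(W_{s-1})-\nabla F(W_s)\bigr)+(1-\beta)\sum_s\beta^{t-s}\zeta_s .
\]
\begin{itemize}
\item The drift terms are bounded with (A1): $\|\nabla F(W_{s-1})-\nabla F(W_s)\|_{\mathrm{tr}}\le L\eta(1+\bar\delta)$. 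Summing the geometric series gives $\beta L\eta(1+\bar\delta)/(1-\beta)$.
\item The noise term is handled by passing to the Frobenius norm via $\rho$. Because the $\zeta_s$ are martingale differences, Jensen gives
\[
\mathbb E\Bigl\|(1-\beta)\sum_s\beta^{t-s}\zeta_s\Bigr\|_F\le(1-\beta)\sigma\big/\sqrt{1-\beta^2}\le\sigma\sqrt{1-\beta}.
\]
\item Averaging $\beta^{t-1}\mathbb E e_1$ over $t$ gives at most $\mathbb E e_1/(N(1-\beta))$.
\end{itemize}
Multiplying each by $2$ reproduces the remaining three terms of the bound. The care needed here is twofold. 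First, conditional expectations must respect $\mathcal F_t$. Second, the iterates depend on the refresh pattern $r_t$, but the bounds above use only the uniform bound $\bar\delta$, so the adaptivity of $r_t$ does no harm.

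\textbf{Rate.} Substitute $\eta=\Theta(N^{-3/4})$ and $1-\beta=\Theta(N^{-1/2})$. Then $\Delta_1/(N\eta)$, $\eta/(1-\beta)$, $\sqrt{1-\beta}$, and $1/(N(1-\beta))$ are each $O(N^{-1/4})$, and $\eta=O(N^{-3/4})$. This gives the claimed $O\bigl(1/((1-\bar\delta)N^{1/4})\bigr)$ rate.
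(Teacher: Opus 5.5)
Your proposal is correct and follows the paper's proof essentially step for step: the inexact-direction descent inequality from (A1), telescoping, the unrolled momentum-error recursion with the $L\eta(1+\bar\delta)$ drift bound and the martingale/Jensen noise bound through $\rho$, and the same parameter substitution for the rate. The only deviation is cosmetic. You split $\langle\nabla F(W_t),\hat D_t\rangle$ around $M_t$, which uses $\bar\delta<1$ when you multiply $\|M_t\|_{\mathrm{tr}}\ge\|\nabla F(W_t)\|_{\mathrm{tr}}-e_t$ by $1-\bar\delta$; the paper instead splits around $\nabla F(W_t)$ by pairing $\hat D_t-D_t$ with $\nabla F(W_t)$, and both routes give the same bound $(1-\bar\delta)\|\nabla F(W_t)\|_{\mathrm{tr}}-2e_t$.
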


\begin{remark}
Theorem~\ref{thm:main} shows that caching changes Muon only through the aggregate inexactness parameter $\bar\delta$. When every step refreshes, $\tau_t=0$ and the result reduces to the fresh-orthogonalization case up to the approximation error of the inner solver. More aggressive reuse affects the bound only through the induced drift term $L_Q\bar\tau$. All proofs are deferred to the appendix.
\end{remark}

%% file: sections/complexity_implementation.tex
\section{Complexity and Implementation}
\label{sec:complexity_implementation}

\paragraph{Practical Trigger.}

The convergence analysis in \Cref{sec:convergence} assumes bounded Gram staleness, suggesting that reuse is allowed only while the cached transform remains close to the current orthogonalization map. In practice, however, directly monitoring the theorem quantity \(\|A_t - A_{a_t}\|_{\mathrm{sp}}\) is costly. Instead, we use a cheap surrogate based on the orthogonality residual of the cached candidate \(\hat D_t^{\mathrm{cand}} = Q_{a_t} X_t\):
\[
\mathrm{err}_t
:=
\frac{\|\hat D_t^{\mathrm{cand}}(\hat D_t^{\mathrm{cand}})^\top - I\|_F}{\|I\|_F}.
\]
We accept reuse when \(\mathrm{err}_t \le \gamma\), where \(\gamma > 0\) is a prescribed threshold; otherwise, we refresh the cache with a FreshGNS solve. This criterion checks whether the cached transform still produces a nearly semi-orthogonal output and serves as a practical proxy for the theoretical staleness condition.

\paragraph{FLOPs Analysis.}

CacheMuon changes Muon only through the orthogonalization routine, so its computational effect can be described directly in terms of orthogonalization FLOPs. In this section, the baseline solver is the stabilized Gram Newton--Schulz (GNS) routine in \Cref{alg:gns}. Let \(F_{\mathrm{GNS}}\) denote the FLOPs of one call to \Cref{alg:gns}, and decompose
\[
F_{\mathrm{GNS}} = F_{\mathrm{norm}} + F_{\mathrm{rem}},
\]
where \(F_{\mathrm{norm}}\) is the matrix normalization cost shared by all calls and \(F_{\mathrm{rem}}\) is the remaining cost of the fresh solve. Relative to \Cref{alg:gns}, Cache Gram Newton--Schulz (CacheGNS) in \Cref{alg:cache-gns} introduces two extra costs: the probe cost \(\Delta_{\mathrm{probe}}\), consisting of applying the cached transform and evaluating the reuse criterion, and the cache-update cost \(\Delta_{\mathrm{cache}}\), consisting of accumulating and returning the refreshed transform \(Q_{\mathrm{tot}}\) after a miss. 

With this notation, a cache hit costs
\[
F_{\mathrm{hit}} = F_{\mathrm{norm}} + \Delta_{\mathrm{probe}},
\]
while a cache miss costs
\[
F_{\mathrm{miss}} = F_{\mathrm{norm}} + F_{\mathrm{rem}} + \Delta_{\mathrm{probe}} + \Delta_{\mathrm{cache}}.
\]

Ignoring the one-time cache-initialization call to Fresh Gram Newton--Schulz (FreshGNS), let \(p \in [0,1]\) denote the fraction of subsequent calls to CacheGNS that are cache hits. Then the average FLOPs per call are
\[
\bar F_{\mathrm{cache}}(p)
=
pF_{\mathrm{hit}} + (1-p)F_{\mathrm{miss}}
=
F_{\mathrm{norm}} + \Delta_{\mathrm{probe}} + (1-p)\bigl(F_{\mathrm{rem}} + \Delta_{\mathrm{cache}}\bigr).
\]
To determine when caching is beneficial, we compare this average cost against one fresh call to \Cref{alg:gns}, namely \(F_{\mathrm{GNS}} = F_{\mathrm{norm}} + F_{\mathrm{rem}}\), and require \(\bar F_{\mathrm{cache}}(p) < F_{\mathrm{GNS}}\). This gives
\[
p >
\frac{\Delta_{\mathrm{probe}} + \Delta_{\mathrm{cache}}}
     {F_{\mathrm{rem}} + \Delta_{\mathrm{cache}}}.
\]
Thus, caching reduces FLOPs when the hit rate is large enough to offset the probe cost and the cache-refresh overhead. This condition makes the tradeoff explicit: each hit avoids the post-normalization fresh-solver work \(F_{\mathrm{rem}}\), while each miss pays both the probe cost and the transform-accumulation overhead before falling back to the usual Gram Newton--Schulz work. Exact formulae for \(F_{\mathrm{norm}}, F_{\mathrm{rem}}, \Delta_{\mathrm{probe}}\), and \(\Delta_{\mathrm{cache}}\) are collected in Appendix~\ref{app:additional-experimental-details}, while \Cref{alg:cache-gns-flops} gives the corresponding line-by-line FLOPs annotations.

%% file: sections/numerical_experiments.tex
\section{Numerical Experiments}
\label{sec:numerical-experiments}

We evaluate CacheMuon from three complementary perspectives. First, we show the end-to-end quality vs efficiency tradeoff on GPT-2 Large by varying the residual threshold \(\gamma\). Second, we run diagnostic experiments to study the dynamics of cached updates over time. Third, we report supporting end-to-end results on GPT-2 Small and ResNet-18 / CIFAR-10 to show that the same qualitative behavior persists across both model scale and modality. Unless stated otherwise, GramMuon denotes Muon with a stabilized Gram Newton--Schulz (\Cref{alg:gns}) solve at every step, and PolarExpMuon denotes Muon with standard Newton--Schulz (\Cref{alg:standard-ns}) using PolarExpress coefficients \citep{muon,gram_newton_schulz,polar_express}. CacheMuon uses CacheGNS (\Cref{alg:cache-gns}) that runs FreshGNS on refresh steps, and reuses the cached transform on accepted steps.

\paragraph{GPT-2 Large quality--efficiency tradeoff.}
Our main experiment trains GPT-2 Large \citep{gpt2} on OpenWebText \citep{openwebtext} for 1B tokens and subsequently on 4.8B tokens. We first tune PolarExpMuon on a 100M-token proxy run and reuse the resulting hyperparameters for GramMuon and CacheMuon in the longer runs. For the 1B-token experiment, we vary only the residual threshold \(\gamma\in\{2,5,10,15\}\) in CacheMuon. This isolates the effect of temporal reuse: smaller \(\gamma\) forces more refreshes and stays closer to fresh orthogonalization, while larger \(\gamma\) permits more reuse and therefore larger orthogonalization-FLOPs savings. Because the forward/backward pass and all non-orthogonalization work are unchanged across GramMuon, PolarExpMuon, and CacheMuon, we report orthogonalization FLOPs savings as the efficiency metric. \Cref{fig:gpt2large-quality-flops} shows the resulting tradeoff. Conservative CacheMuon thresholds closely track the fresh baselines: \(\gamma=2\) reaches a best validation loss of 3.041, compared with 3.037 for GramMuon and 3.039 for PolarExpMuon, while still saving 13.6\% of orthogonalization FLOPs relative to GramMuon and 30.9\% relative to PolarExpMuon. Increasing \(\gamma\) gives a smooth efficiency frontier: \(\gamma=5,10,15\) save 42.5\%, 56.0\%, and 65.0\% relative to GramMuon, respectively, and 54.0\%, 64.8\%, and 72.0\% relative to PolarExpMuon. Even the most aggressive setting tested, \(\gamma=15\), reaches a best validation loss of 3.096. Thus, \(\gamma\) behaves as a single knob that interpolates between fresh Muon behavior and more aggressive orthogonalization reuse. A longer 4.8B-token continuation of the same conservative \(\gamma=2\) setting shows that CacheMuon continues to closely track the fresh baselines, essentially matching PolarExpMuon, while increasing orthogonalization-FLOPs savings from 13.6\% to 22\% relative to GramMuon and from 30.9\% to 38\% relative to PolarExpMuon (\Cref{fig:gpt2large-quality-4p8b-app}).

\begin{figure}[t]
    \centering
    \includegraphics[width=\linewidth]{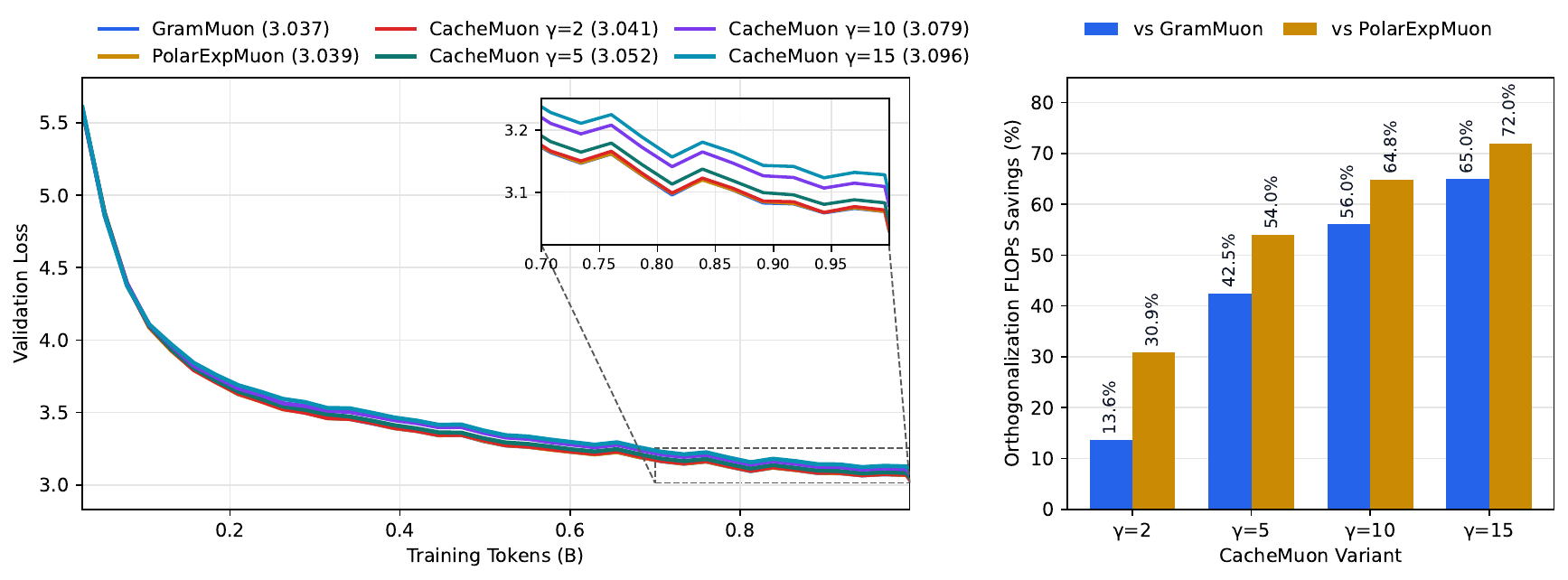}
    \caption{Left: validation loss over a 1B-token OpenWebText run for GramMuon, PolarExpMuon, and CacheMuon with different residual thresholds \(\gamma\). Numbers in parentheses denote the best validation loss reached by each run. Right: orthogonalization-FLOPs savings of each CacheMuon threshold relative to the GramMuon and PolarExpMuon. Lower thresholds closely track fresh orthogonalization, while larger thresholds trade a small validation-loss gap for larger orthogonalization savings.}
    \label{fig:gpt2large-quality-flops}
\end{figure}

\paragraph{Residual-gate diagnostics.}
We next test whether the practical residual trigger is a meaningful proxy for cached-update quality on a 100M-token GPT-2 Large run with \(\gamma=5\). \Cref{fig:gpt2large-diagnostics} helps explain the smooth frontier in \Cref{fig:gpt2large-quality-flops}. Early in training, cached candidates have large residuals and relatively weak agreement with the corresponding fresh update, so the trigger rejects most reuse attempts. As optimization stabilizes, the cached candidate becomes both more orthogonal and more aligned with the fresh update, which creates more valid reuse opportunities. \Cref{fig:gpt2large-diagnostics}(b) is particularly informative: the accepted-only curve lies consistently above the all-probe median, showing that the trigger selects a subset of cached updates that is systematically closer to fresh orthogonalization than the full population of probes. \Cref{fig:gpt2large-diagnostics}(c) further shows that lower residual is associated with higher cached/fresh cosine, so \(\gamma\) acts as a practical fidelity knob rather than an arbitrary threshold. Finally, \Cref{fig:gpt2large-diagnostics}(d) shows the computational consequence: once the momentum changes more smoothly, accepted reuse becomes more frequent and orthogonalization work is reduced, suggesting that CacheMuon can be even more efficient for longer runs where cache hits are frequent.

\begin{figure}[t]
    \centering
    \begin{minipage}[t]{0.49\linewidth}
        \centering
        \includegraphics[width=\linewidth]{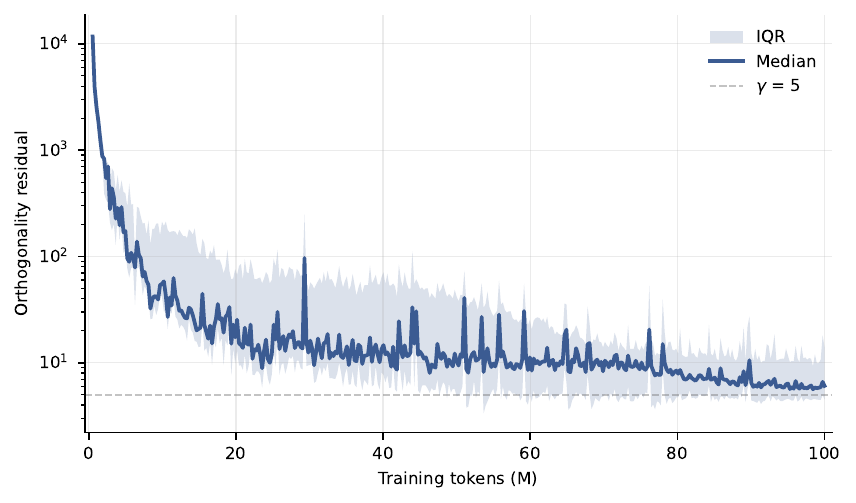}
        {\footnotesize \textbf{(a)} Cached-candidate residual over training.}
    \end{minipage}\hfill
    \begin{minipage}[t]{0.49\linewidth}
        \centering
        \includegraphics[width=\linewidth]{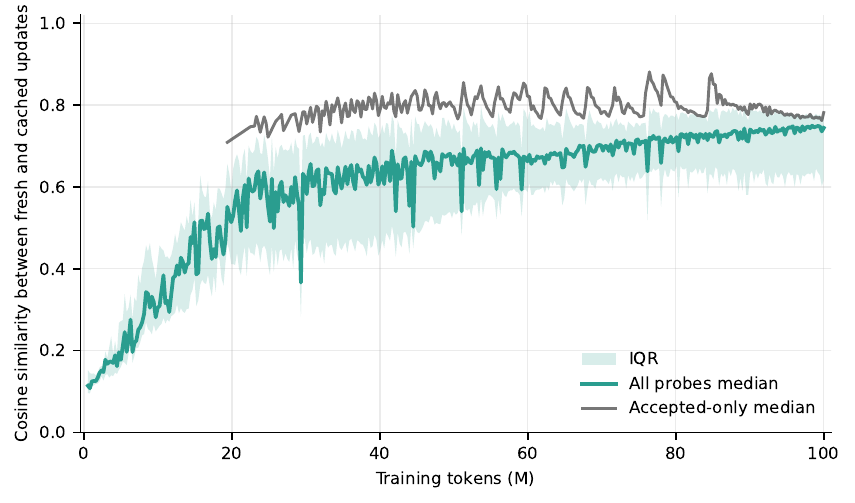}
        {\footnotesize \textbf{(b)} Cached/fresh cosine over training.}
    \end{minipage}

    \vspace{0.6\baselineskip}

    \begin{minipage}[t]{0.49\linewidth}
        \centering
        \includegraphics[width=\linewidth]{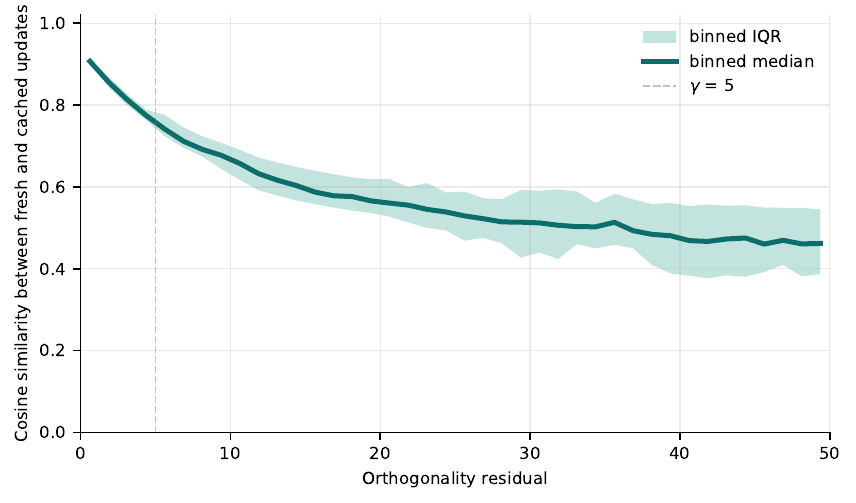}
        {\footnotesize \textbf{(c)} Residual versus cached/fresh cosine.}
    \end{minipage}\hfill
    \begin{minipage}[t]{0.49\linewidth}
        \centering
        \includegraphics[width=\linewidth]{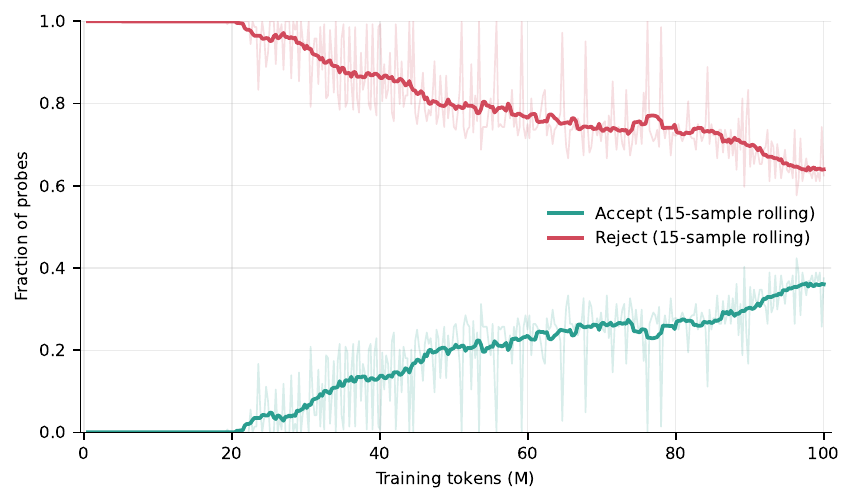}
        {\footnotesize \textbf{(d)} Smoothed cache accept/reject rates over training.}
    \end{minipage}
    \caption{Cache diagnostics from a 100M-token GPT-2 Large / OpenWebText run with CacheMuon (\(\gamma=5\)). Residuals decrease over training (a); cached and fresh updates become more aligned over time, and the accepted subset is consistently better aligned than the full probe set (b); lower residual is associated with higher cached/fresh cosine (c); and reuse is accepted more often later in training (d). Together, these trends show that the residual gate is a useful proxy for cached-update fidelity.}
    \label{fig:gpt2large-diagnostics}
\end{figure}

\paragraph{Additional training runs.}
We also evaluate CacheMuon on GPT-2 Small / OpenWebText and ResNet-18 / CIFAR-10 \citep{resnet_deepresidual,cifar10} as supporting evidence across scale and modality. For GPT-2 Small, we follow the same 100M tune and 1B run protocol used above; the ResNet-18 runs are tuned separately. In both settings, CacheMuon remains close to the fresh baselines while reducing cumulative orthogonalization work. On GPT-2 Small, CacheMuon saves 28\% of orthogonalization FLOPs relative to GramMuon and 43\% relative to PolarExpMuon; on ResNet-18, it saves 15\% and 51\%, respectively. Full training curves, cumulative-FLOPs tables, and further diagnostic runs are deferred to \Cref{app:additional-experimental-details}.

\FloatBarrier

%% file: sections/discussion_and_limitation.tex
\section{Discussion and Limitations}
One important caveat of our approach is that reductions in theoretical FLOPs do not directly translate to proportional wall-clock speedups in our current high-level implementation. In our GPT-2 Large runs, CacheMuon yields only modest end-to-end runtime improvements even when the orthogonalization-FLOPs savings are substantial. Cache reuse requires probing the validity of the cached transform at each step, which introduces host--device synchronization and creates a serialization point in the execution pipeline. As a result, although our method lowers orthogonalization FLOPs, the realized speedup remains constrained by systems-level overheads and the rest of the training stack rather than arithmetic cost alone. A more optimized low-level implementation---for example, by fusing the probe and reuse criterion into device-side kernels or avoiding explicit synchronization---could translate a larger fraction of the arithmetic savings into wall-clock gains.

%% file: sections/conclusion.tex
\section{Conclusion}
We introduced CacheMuon, a Muon variant that amortizes polar-factor computation by reusing cached transforms across optimization steps. CacheMuon changes only the orthogonalization routine, using a residual gate to decide when reuse remains accurate and when a fresh solve is needed. We analyzed the method as an inexact Muon update, with error controlled by fresh-solver accuracy and cache staleness, and established a stochastic convergence guarantee under standard assumptions. Empirically, CacheMuon provides a practical quality--efficiency tradeoff: conservative reuse closely matches fresh Muon while reducing orthogonalization FLOPs, and more aggressive reuse yields larger savings with modest quality degradation. These results suggest that temporal reuse is a promising direction for reducing the cost of orthogonalization in Muon.

%% file: sections/appendix1.tex
\section{Algorithms}
\label{app:algorithms}

For convenience, we collect all pseudocode in one place. The standard Newton--Schulz routine used in Muon (\cite{muon}) is given in \Cref{alg:standard-ns}. PolarExpress (\cite{polar_express}) also uses the same update structure, but with optimized coefficients $\{(a_k,b_k,c_k)\}_{k=1}^K$.

\renewcommand{\thealgorithm}{A.\arabic{algorithm}}
\setcounter{algorithm}{0}

\begin{algorithm}[h]
\caption{StandardNS: Standard Newton--Schulz}\label{alg:standard-ns}
\begin{algorithmic}[1]
\Require matrix $M \in \mathbb{R}^{n\times m}$, coefficients $\{(a_k,b_k,c_k)\}_{k=1}^K$, $\varepsilon > 0$
\State $X \gets M / (\|M\|_F + \varepsilon)$
\If{$n > m$}
    \State $X \gets X^\top$
\EndIf
\For{$k=1,\ldots,K$}
    \State $A \gets X X^\top$
    \State $P \gets a_k I + b_k A + c_k A^2$
    \State $X \gets P X$
\EndFor
\If{$n > m$}
    \State $X \gets X^\top$
\EndIf
\State \Return $X$
\end{algorithmic}
\end{algorithm}

The stabilized Gram Newton--Schulz routine used in GramMuon (\cite{gram_newton_schulz}) is summarized in \Cref{alg:gns}. When the restart set satisfies $\mathcal{S} = \emptyset$, \Cref{alg:gns} reduces to the original Gram Newton--Schulz iteration, which is unstable and suffers from numerical issues in finite precision.

\begin{algorithm}[h]
\caption{GramNS: Stabilized Gram Newton--Schulz}
\label{alg:gns}
\begin{algorithmic}[1]
\Require matrix \(M \in \mathbb{R}^{n\times m}\), coefficients \(\{(a_k,b_k,c_k)\}_{k=1}^K\), restart set \(\mathcal S \subseteq \{2,\dots,K\}\), \(\varepsilon > 0\)
\State \(X \gets M / (\|M\|_F + \varepsilon)\)
\If{\(n > m\)}
    \State \(X \gets X^\top\)
\EndIf
\State \(R \gets X X^\top\)
\State \(Q_{\mathrm{loc}} \gets I\)
\For{\(k = 1,\dots,K\)}
    \If{\(k \in \mathcal S\)}
        \State \(X \gets Q_{\mathrm{loc}} X\)
        \State \(R \gets X X^\top\)
        \State \(Q_{\mathrm{loc}} \gets I\)
    \EndIf
    \State \(Z \gets b_k R + c_k R^2\)
    \State \(Q_{\mathrm{loc}} \gets Q_{\mathrm{loc}} Z + a_k Q_{\mathrm{loc}}\)
    \If{\(k < K\) and \(k+1 \notin \mathcal S\)}
        \State \(RZ \gets R Z + a_k R\)
        \State \(R \gets Z RZ + a_k RZ\)
    \EndIf
\EndFor
\State \(X \gets Q_{\mathrm{loc}} X\)
\If{\(n > m\)}
    \State \(X \gets X^\top\)
\EndIf
\State \Return \(X\)
\end{algorithmic}
\end{algorithm}

We next modify stabilized Gram Newton--Schulz to keep track of the cumulative transform \(Q_{\mathrm{tot}}\), which maps the normalized input \(X\) to its approximate polar factor. The resulting procedure is shown in \Cref{alg:fresh-gns}, where the colored lines highlight the modifications. Relative to \Cref{alg:gns}, FreshGNS in \Cref{alg:fresh-gns} incurs the additional cost of accumulating \(Q_{\mathrm{tot}}\). In CacheMuon, we use FreshGNS at the first step to initialize the cached transform and then use CacheGNS in \Cref{alg:cache-gns} at subsequent steps to reuse that cache. The full CacheMuon procedure is given in \Cref{alg:cache-muon}.

\begin{algorithm}[h]
\caption{FreshGNS: Fresh stabilized Gram Newton--Schulz}
\label{alg:fresh-gns}
\begin{algorithmic}[1]
\Require matrix \(M \in \mathbb{R}^{n\times m}\), coefficients \(\{(a_k,b_k,c_k)\}_{k=1}^K\), restart set \(\mathcal S \subseteq \{2,\dots,K\}\), \(\varepsilon > 0\)
\State \(X \gets M / (\|M\|_F + \varepsilon)\)
\If{\(n > m\)}
    \State \(X \gets X^\top\)
\EndIf
\State \textcolor{blue}{\(\mathsf{hasTot} \gets \mathrm{false}\)}
\State \(R \gets XX^\top\)
\State \(Q_{\mathrm{loc}} \gets I\)
\For{\(k = 1,\dots,K\)}
    \If{\(k \in \mathcal S\)}
        \State \(X \gets Q_{\mathrm{loc}} X\)
        \If{\textcolor{blue}{\(\mathsf{hasTot\ is\ false}\)}}
            \State \textcolor{blue}{\(Q_{\mathrm{tot}} \gets Q_{\mathrm{loc}}\)}
            \State \textcolor{blue}{\(\mathsf{hasTot} \gets \mathrm{true}\)}
        \Else
            \State \textcolor{blue}{\(Q_{\mathrm{tot}} \gets Q_{\mathrm{loc}} Q_{\mathrm{tot}}\)}
        \EndIf
        \State \(R \gets XX^\top\)
        \State \(Q_{\mathrm{loc}} \gets I\)
    \EndIf
    \State \(Z \gets b_k R + c_k R^2\)
    \State \(Q_{\mathrm{loc}} \gets Q_{\mathrm{loc}} Z + a_k Q_{\mathrm{loc}}\)
    \If{\(k < K\) and \(k+1 \notin \mathcal S\)}
        \State \(RZ \gets R Z + a_k R\)
        \State \(R \gets Z RZ + a_k RZ\)
    \EndIf
\EndFor
\State \(X \gets Q_{\mathrm{loc}} X\)
\If{\textcolor{blue}{\(\mathsf{hasTot\ is\ false}\)}}
    \State \textcolor{blue}{\(Q_{\mathrm{tot}} \gets Q_{\mathrm{loc}}\)}
\Else
    \State \textcolor{blue}{\(Q_{\mathrm{tot}} \gets Q_{\mathrm{loc}} Q_{\mathrm{tot}}\)}
\EndIf
\If{\(n > m\)}
    \State \(X \gets X^\top\)
\EndIf
\State \Return \(X, \textcolor{blue}{Q_{\mathrm{tot}}}\)
\end{algorithmic}
\end{algorithm}

\begin{algorithm}[h]
\caption{CacheGNS: Cache Gram Newton--Schulz}
\label{alg:cache-gns}
\begin{algorithmic}[1]
\Require matrix \(M \in \mathbb{R}^{n\times m}\) with \(n \le m\), \textcolor{blue}{cached transform \(Q_{\mathrm{cache}} \in \mathbb{R}^{n\times n}\)}, coefficients \(\{(a_k,b_k,c_k)\}_{k=1}^K\), restart set \(\mathcal S \subseteq \{2,\dots,K\}\), \textcolor{blue}{threshold \(\gamma > 0\)}, \(\varepsilon > 0\)
\State \(X \gets M / (\|M\|_F + \varepsilon)\)
\State \textcolor{blue}{\(X_{\mathrm{cand}} \gets Q_{\mathrm{cache}} X\)}
\If{\textcolor{blue}{\(\|X_{\mathrm{cand}}X_{\mathrm{cand}}^\top - I\|_F / \|I\|_F \le \gamma\)}}
    \State \Return \textcolor{blue}{\(X_{\mathrm{cand}}, Q_{\mathrm{cache}}\)}
\EndIf
\State \textcolor{blue}{\(\mathsf{hasTot} \gets \mathrm{false}\)}
\State \(R \gets X X^\top\)
\State \(Q_{\mathrm{loc}} \gets I\)
\For{\(k = 1,\dots,K\)}
    \If{\(k \in \mathcal S\)}
        \State \(X \gets Q_{\mathrm{loc}} X\)
        \If{\textcolor{blue}{\(\mathsf{hasTot\ is\ false}\)}}
            \State \textcolor{blue}{\(Q_{\mathrm{tot}} \gets Q_{\mathrm{loc}}\)}
            \State \textcolor{blue}{\(\mathsf{hasTot} \gets \mathrm{true}\)}
        \Else
            \State \textcolor{blue}{\(Q_{\mathrm{tot}} \gets Q_{\mathrm{loc}} Q_{\mathrm{tot}}\)}
        \EndIf
        \State \(R \gets X X^\top\)
        \State \(Q_{\mathrm{loc}} \gets I\)
    \EndIf
    \State \(Z \gets b_k R + c_k R^2\)
    \State \(Q_{\mathrm{loc}} \gets Q_{\mathrm{loc}} Z + a_k Q_{\mathrm{loc}}\)
    \If{\(k < K\) and \(k+1 \notin \mathcal S\)}
        \State \(RZ \gets R Z + a_k R\)
        \State \(R \gets Z RZ + a_k RZ\)
    \EndIf
\EndFor
\State \(X \gets Q_{\mathrm{loc}} X\)
\If{\textcolor{blue}{\(\mathsf{hasTot\ is\ false}\)}}
    \State \textcolor{blue}{\(Q_{\mathrm{tot}} \gets Q_{\mathrm{loc}}\)}
\Else
    \State \textcolor{blue}{\(Q_{\mathrm{tot}} \gets Q_{\mathrm{loc}} Q_{\mathrm{tot}}\)}
\EndIf
\State \Return \(X, Q_{\mathrm{tot}}\)
\end{algorithmic}
\end{algorithm}

\begin{algorithm}[h]
\caption{CacheMuon}\label{alg:cache-muon}
\begin{algorithmic}[1]
\Require stepsize $\eta > 0$, momentum $\beta \in [0,1)$, coefficients $\{(a_k,b_k,c_k)\}_{k=1}^K$, restart set $\mathcal S \subseteq \{2,\dots,K\}$, threshold $\gamma > 0$, $\varepsilon > 0$
\State Initialize $M_0 \gets 0$

\State $g_1 \gets \nabla f(W_1;\xi_1)$
\State $M_1 \gets (1-\beta) g_1 + \beta M_0$
\State $\hat D_1, Q_1 \gets \FreshGNS(M_1, \{(a_k,b_k,c_k)\}_{k=1}^K, \mathcal S, \varepsilon)$ \Comment{Run fresh GNS to get an initial cache}
\State $W_2 \gets W_1 - \eta \hat D_1$

\For{$t=2,\ldots$}
    \State $g_t \gets \nabla f(W_t;\xi_t)$
    \State $M_t \gets (1-\beta) g_t + \beta M_{t-1}$
    \State $\hat D_t, Q_t \gets \CacheGNS(M_t, Q_{t-1}, \{(a_k,b_k,c_k)\}_{k=1}^K, \mathcal S, \gamma, \varepsilon)$ \Comment{Use cache}
    \State $W_{t+1} \gets W_t - \eta \hat D_t$
\EndFor
\end{algorithmic}
\end{algorithm}

For completeness, \Cref{alg:cache-gns-flops} presents CacheGNS again with explicit FLOPs annotations for the operations used in the complexity discussion.

\begin{algorithm}[h]
\caption{Cache Gram Newton--Schulz with FLOPs annotations}
\label{alg:cache-gns-flops}
\begin{algorithmic}[1]
\Require matrix \(M \in \mathbb{R}^{n\times m}\) with \(n \le m\), \textcolor{blue}{cached transform \(Q_{\mathrm{cache}} \in \mathbb{R}^{n\times n}\)}, coefficients \(\{(a_k,b_k,c_k)\}_{k=1}^K\), restart set \(\mathcal S \subseteq \{2,\dots,K\}\), \textcolor{blue}{threshold \(\gamma > 0\)}, \(\varepsilon > 0\)
\State \(X \gets M / (\|M\|_F + \varepsilon)\) \Comment{\((3nm + 1)\ \mathrm{FLOPs}\)}
\State \textcolor{blue}{\(X_{\mathrm{cand}} \gets Q_{\mathrm{cache}} X\)} \Comment{\textcolor{blue}{\(nm(2n-1)\ \mathrm{FLOPs}\)}}
\If{\textcolor{blue}{\(\|X_{\mathrm{cand}}X_{\mathrm{cand}}^\top - I\|_F / \|I\|_F \le \gamma\)}} \Comment{\(\textcolor{blue}{(2n^2m + n^2 + n + 1)\ \mathrm{FLOPs}}\)}
    \State \Return \textcolor{blue}{\(X_{\mathrm{cand}}, Q_{\mathrm{cache}}\)}
\EndIf
\State \textcolor{blue}{\(\mathsf{hasTot} \gets \mathrm{false}\)}
\State \(R \gets X X^\top\) \Comment{\(n^2(2m-1)\ \mathrm{FLOPs}\)}
\State \(Q_{\mathrm{loc}} \gets I\)
\For{\(k = 1,\dots,K\)}
    \If{\(k \in \mathcal S\)}
        \State \(X \gets Q_{\mathrm{loc}} X\) \Comment{\(nm(2n-1)\ \mathrm{FLOPs}\)}
        \If{\textcolor{blue}{\(\mathsf{hasTot\ is\ false}\)}}
            \State \textcolor{blue}{\(Q_{\mathrm{tot}} \gets Q_{\mathrm{loc}}\)}
            \State \textcolor{blue}{\(\mathsf{hasTot} \gets \mathrm{true}\)}
        \Else
            \State \textcolor{blue}{\(Q_{\mathrm{tot}} \gets Q_{\mathrm{loc}} Q_{\mathrm{tot}}\)} \Comment{\textcolor{blue}{\(n^2(2n-1)\ \mathrm{FLOPs}\)}}
        \EndIf
        \State \(R \gets X X^\top\) \Comment{\(n^2(2m-1)\ \mathrm{FLOPs}\)}
        \State \(Q_{\mathrm{loc}} \gets I\)
    \EndIf
    \State \(Z \gets b_k R + c_k R^2\) \Comment{\(n^2(2n+2)\ \mathrm{FLOPs}\)}
    \State \(Q_{\mathrm{loc}} \gets Q_{\mathrm{loc}} Z + a_k Q_{\mathrm{loc}}\) \Comment{\(n^2(2n+1)\ \mathrm{FLOPs}\)}
    \If{\(k < K\) and \(k+1 \notin \mathcal S\)}
        \State \(RZ \gets R Z + a_k R\) \Comment{\(n^2(2n+1)\ \mathrm{FLOPs}\)}
        \State \(R \gets Z RZ + a_k RZ\) \Comment{\(n^2(2n+1)\ \mathrm{FLOPs}\)}
    \EndIf
\EndFor
\State \(X \gets Q_{\mathrm{loc}} X\) \Comment{\(nm(2n-1)\ \mathrm{FLOPs}\)}
\If{\textcolor{blue}{\(\mathsf{hasTot\ is\ false}\)}}
    \State \textcolor{blue}{\(Q_{\mathrm{tot}} \gets Q_{\mathrm{loc}}\)}
\Else
    \State \textcolor{blue}{\(Q_{\mathrm{tot}} \gets Q_{\mathrm{loc}} Q_{\mathrm{tot}}\)} \Comment{\textcolor{blue}{\(n^2(2n-1)\ \mathrm{FLOPs}\)}}
\EndIf
\State \Return \(X, Q_{\mathrm{tot}}\)
\end{algorithmic}
\end{algorithm}

%% file: sections/appendix2.tex
\section{Additional Experimental Results and Details}
\label{app:additional-experimental-details}

This appendix collects the supporting experiments and implementation details for Section~\ref{sec:numerical-experiments}. The main text focuses on the GPT-2 Large quality--efficiency frontier and the residual-gate diagnostics. Here we provide the ResNet-18 staleness diagnostic, the GPT-2 Small and ResNet-18 end-to-end training curves together with the supporting FLOPs table, and the full training and configuration details.

\subsection{Additional Diagnostic and Training-Quality Results}
\label{app:additional-experimental-results}

\paragraph{Staleness versus update agreement.}
We measure whether cached transforms remain useful across nearby optimizer steps in a ResNet-18 / CIFAR-10 diagnostic run. To that end, we train ResNet-18 \citep{resnet_deepresidual} on CIFAR-10 \citep{cifar10} using fresh Gram Newton--Schulz Muon at every step, and use the cache only for logging. For each matrix parameter and step \(t\), we compute the fresh direction \(\tilde D_t\) and fresh left transform \(Q_t\). For an anchor age \(h\), we form the hypothetical cached direction \(\hat D_t^{(h)} = Q_{t-h}X_t\), where \(X_t\) is the current normalized momentum, and compare \(\hat D_t^{(h)}\) with \(\tilde D_t\). Figure~\ref{fig:age-cosine} reports the cosine similarity \(\cos(\mathrm{vec}(\hat D_t^{(h)}),\mathrm{vec}(\tilde D_t))\) across anchor ages. The steady decay with anchor age shows that stale transforms remain aligned for nearby steps but gradually lose agreement.

\begin{figure}[htbp]
    \centering
    \includegraphics[width=0.65\linewidth]{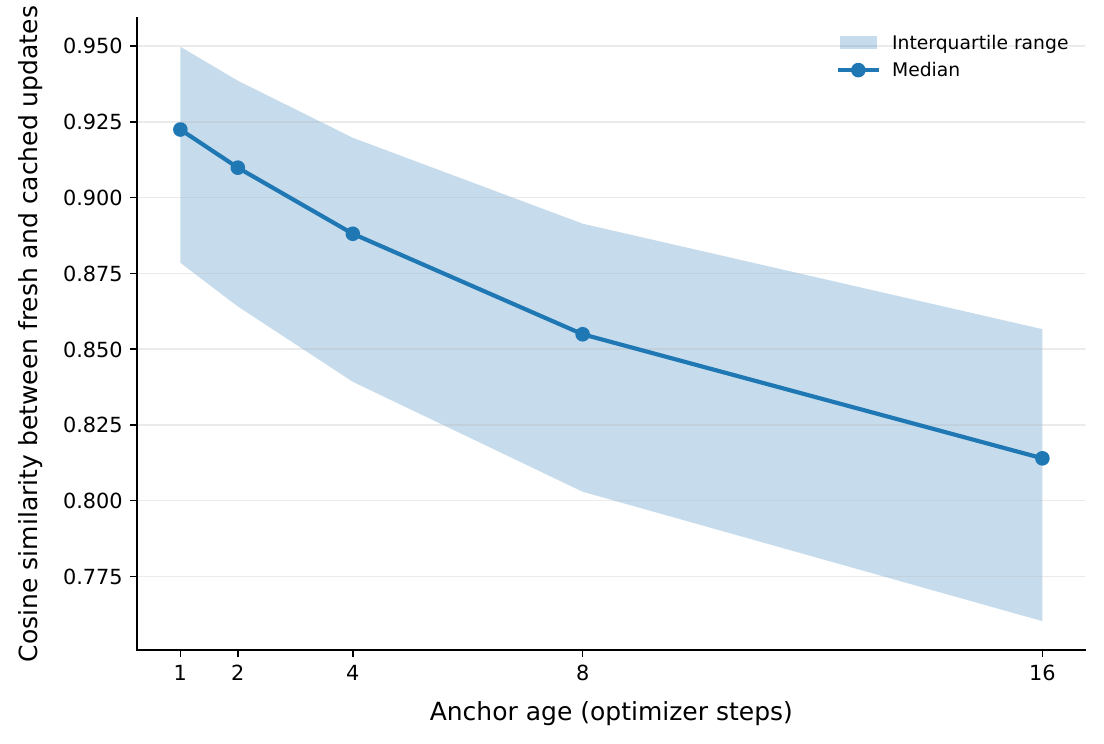}
    \caption{Cosine similarity between the hypothetical cached direction \(\hat D_t^{(h)} = Q_{t-h}X_t\) and the fresh direction \(\tilde D_t\) as a function of anchor age \(h\). Agreement remains high for recent anchor ages and decays as the cached transform becomes more stale.}
    \label{fig:age-cosine}
\end{figure}

\Cref{fig:age-cosine-layerwise} provides a layerwise breakdown of the staleness-versus-agreement diagnostic from \Cref{fig:age-cosine} for the ResNet-18 / CIFAR-10 experiment.

\begin{figure}[htbp]
\centering
\includegraphics[width=0.92\linewidth,height=0.92\textheight,keepaspectratio]{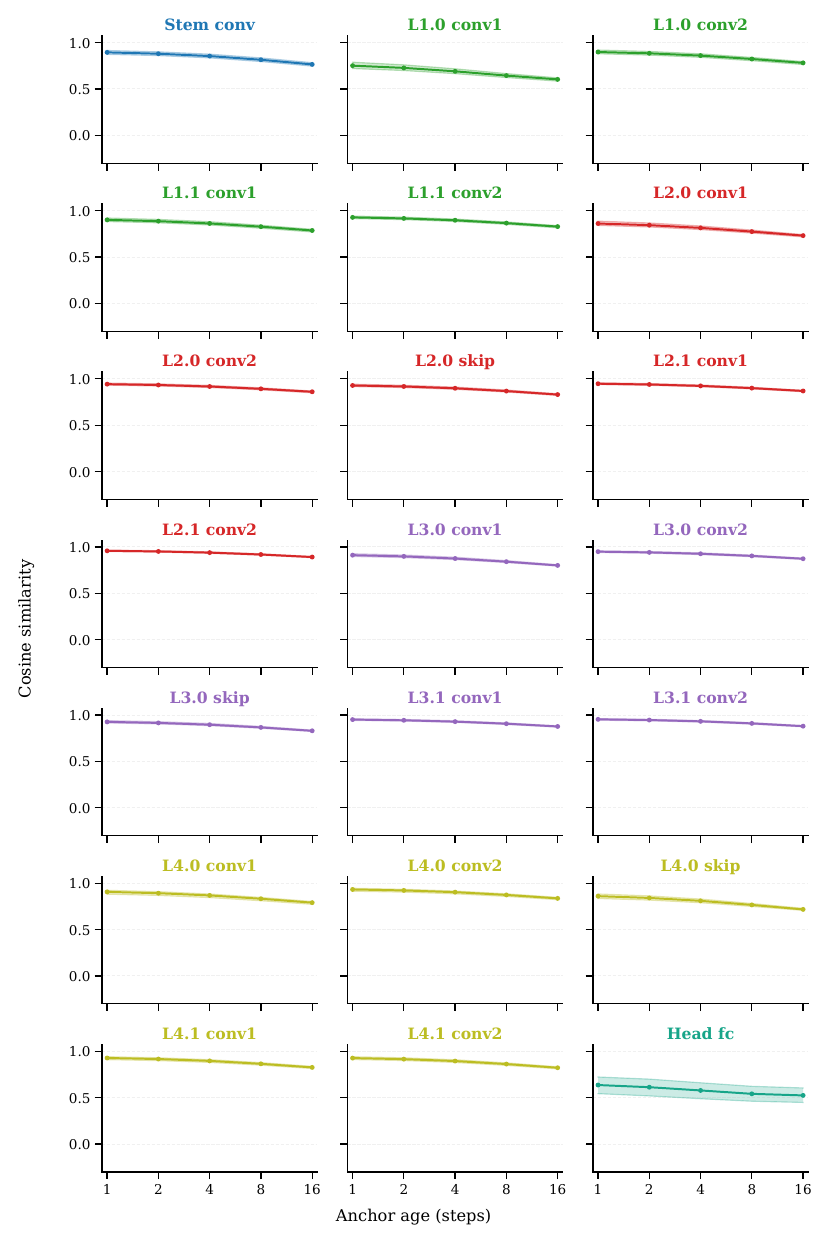}
\vspace{4pt}
\caption{Layerwise cosine similarity between the hypothetical cached direction \(\hat D_t^{(h)} = Q_{t-h}X_t\) and the fresh direction \(\tilde D_t\) for the ResNet-18 / CIFAR-10 experiment underlying \Cref{fig:age-cosine}. Each panel reports the same anchor-age diagnostic as \Cref{fig:age-cosine} for one parameter group.}
\label{fig:age-cosine-layerwise}
\end{figure}

\paragraph{GPT-2 Small and ResNet-18 training quality.}
\Cref{fig:additional-end-to-end} reports end-to-end training quality for CacheMuon, GramMuon, and PolarExpMuon on GPT-2 Small / OpenWebText and ResNet-18 / CIFAR-10. For GPT-2 Small \citep{gpt2} on OpenWebText \citep{openwebtext}, we first tune the hyperparameters on a 100M-token proxy run using PolarExpMuon, and then reuse that same configuration for GramMuon and CacheMuon when running the full 1B-token experiment. Separate tuning sweeps for GramMuon and CacheMuon recover the same optimal hyperparameters, so the reported GPT-2 Small runs ultimately differ only in the orthogonalization rule and the CacheMuon threshold \(\gamma\). The ResNet-18 vision runs are tuned separately. In both settings, CacheMuon closely tracks the fresh-orthogonalization baselines.

\begin{figure}[htbp]
    \centering
    \begin{minipage}[t]{0.49\linewidth}
        \includegraphics[width=\linewidth,height=0.21\textheight,keepaspectratio]{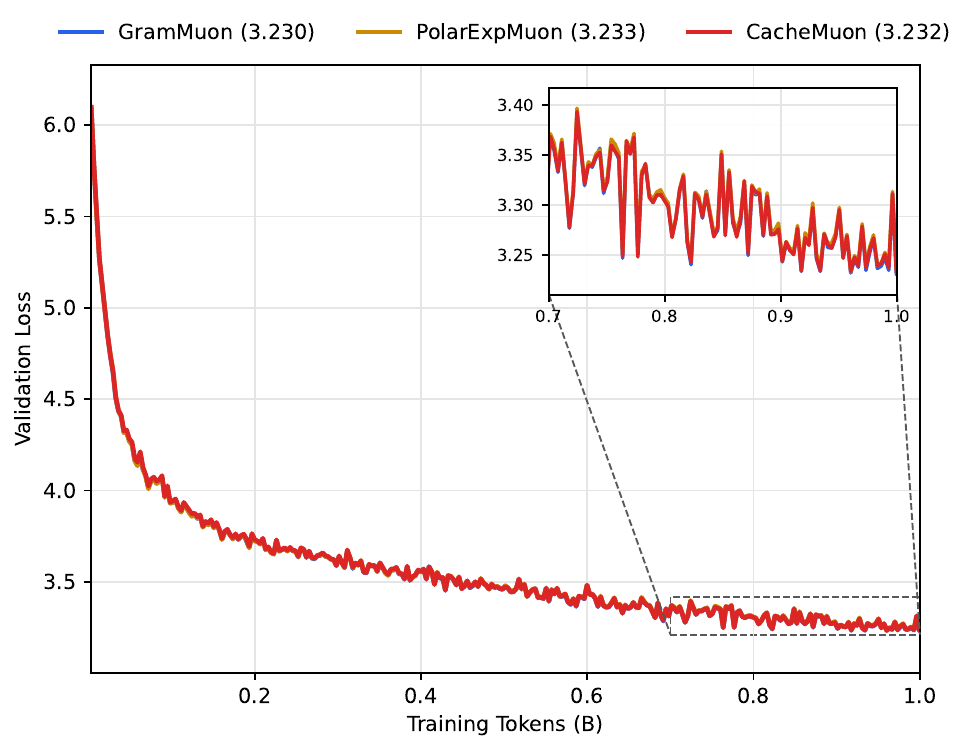}
    \end{minipage}
    \begin{minipage}[t]{0.49\linewidth}
        \includegraphics[width=\linewidth,height=0.21\textheight,keepaspectratio]{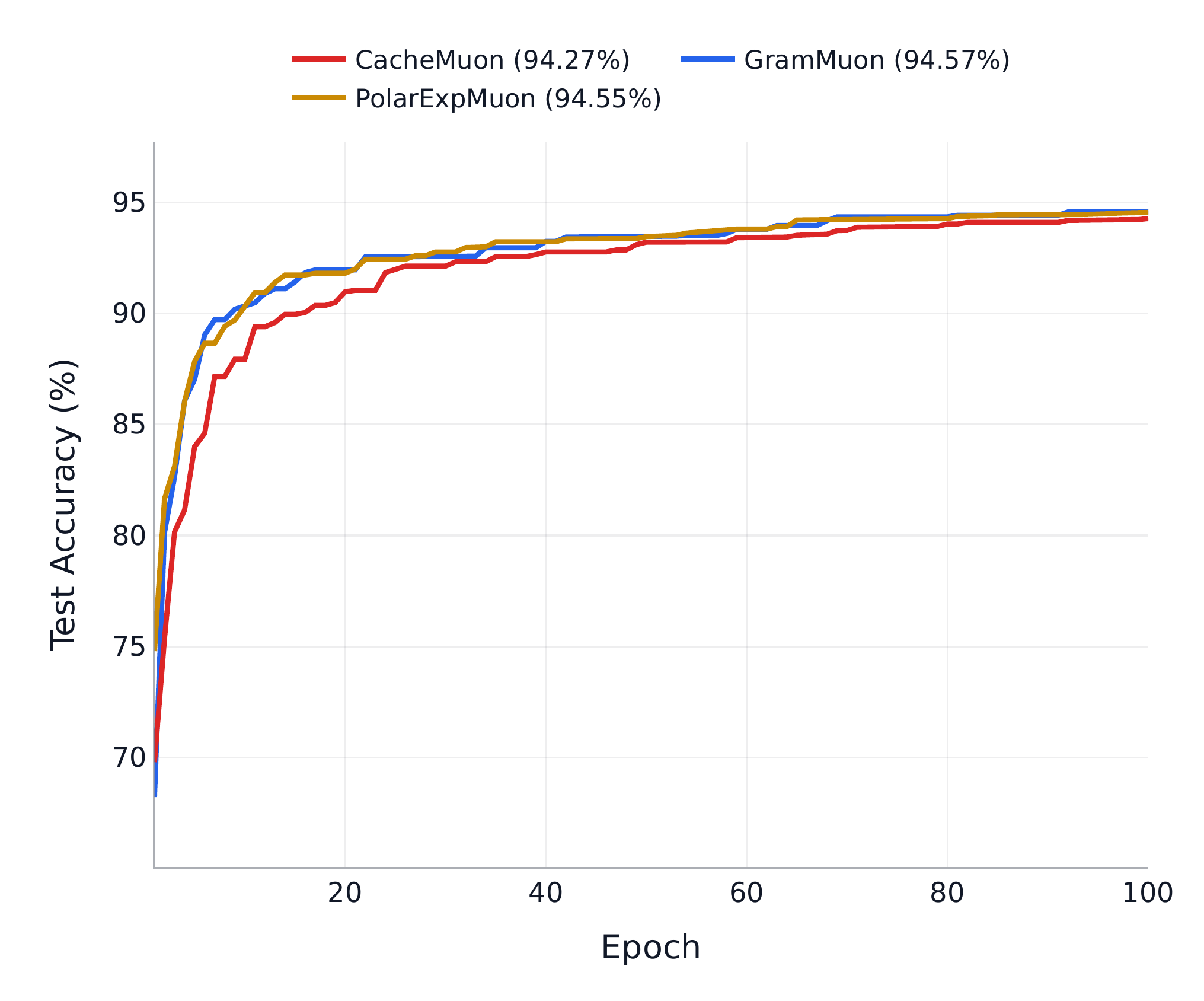}
    \end{minipage}
\caption{End-to-end training quality of CacheMuon, GramMuon, and PolarExpMuon on GPT-2 Small / OpenWebText (left) and ResNet-18 / CIFAR-10 (right). CacheMuon remains competitive with the GramMuon and PolarExpMuon baselines in both settings.}
    \label{fig:additional-end-to-end}
\end{figure}

\begin{figure}[htbp]
    \centering
    \includegraphics[width=0.72\linewidth]{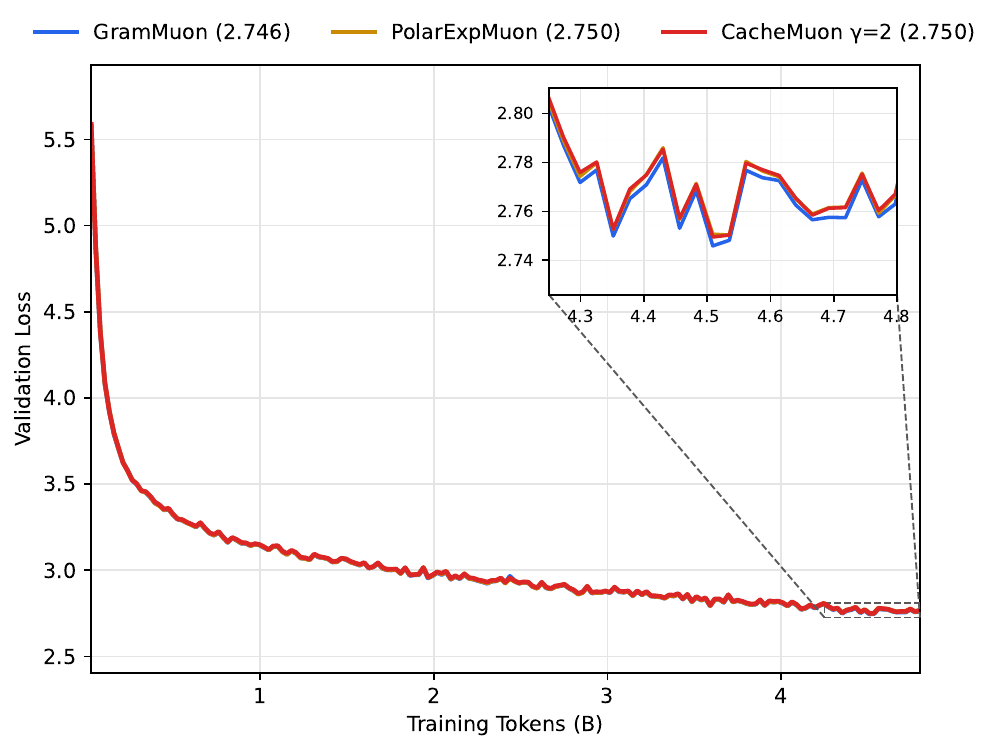}
    \caption{Longer 4.8B-token continuation of the GPT-2 Large / OpenWebText \(\gamma=2\) run from \Cref{fig:gpt2large-quality-flops}, using the same hyperparameters as the 1B experiment. CacheMuon continues to track GramMuon and PolarExpMuon closely while increasing orthogonalization-FLOPs savings to 22\% relative to GramMuon and 38\% relative to PolarExpMuon.}
    \label{fig:gpt2large-quality-4p8b-app}
\end{figure}

\paragraph{Orthogonalization cost formulae.}
To connect the decomposition in \Cref{sec:complexity_implementation} with the line-by-line annotations in \Cref{alg:cache-gns-flops}, we record the corresponding symbolic costs for the \(n \le m\) case used by CacheGNS. Write \(s := |\mathcal S|\) for the number of Gram Newton--Schulz restarts. Then the shared normalization cost is
\[
F_{\mathrm{norm}} = 3nm + 1.
\]

The additional probe overhead incurred before deciding whether to reuse the cache is
\[
\Delta_{\mathrm{probe}}
=
nm(2n-1) + (2n^2m + n^2 + n + 1),
\]
where the first term is the cached-transform application and the second term is the residual check.

The cache-refresh overhead incurred on a miss is
\[
\Delta_{\mathrm{cache}} = s\,n^2(2n-1),
\]
which comes from accumulating \(Q_{\mathrm{tot}}\) across the \(s\) restart boundaries.

For the fresh solver, the post-normalization cost is
\begin{align*}
F_{\mathrm{rem}}
&= (s+1)n^2(2m-1) \\
&\quad + (s+1)nm(2n-1) \\
&\quad + K n^2(2n+2) \\
&\quad + (3K-2-2s)\,n^2(2n+1).
\end{align*}
Here the first two lines account for the \(s+1\) Gram rebuilds and segment-ending applications of the local transform to \(X\), while the last two lines collect the \(K\) polynomial updates and the \(3K-2-2s\) recurrence updates that remain after excluding the skipped restart transitions.

Therefore,
\[
F_{\mathrm{GNS}} = F_{\mathrm{norm}} + F_{\mathrm{rem}},
\]
and the per-call CacheGNS costs are
\[
F_{\mathrm{hit}} = F_{\mathrm{norm}} + \Delta_{\mathrm{probe}},
\]
\[
F_{\mathrm{miss}} = F_{\mathrm{norm}} + F_{\mathrm{rem}} + \Delta_{\mathrm{probe}} + \Delta_{\mathrm{cache}}.
\]
These are the symbolic per-call quantities whose realized cumulative totals are measured in \Cref{tab:additional-run-flops}.

\paragraph{Full-run orthogonalization FLOPs comparison.}
\Cref{tab:additional-run-flops} reports the realized cumulative orthogonalization FLOPs for the GPT-2 Large in \Cref{fig:gpt2large-quality-flops} together with the GPT-2 Small and ResNet18 supporting runs used in \Cref{fig:additional-end-to-end}. Since the forward/backward pass and all non-orthogonalization optimizer work are identical across GramMuon, PolarExpMuon, and CacheMuon, the table isolates only the orthogonalization cost in PFLOPs. It lists the cumulative orthogonalization cost for GramMuon, PolarExpMuon, and CacheMuon on GPT-2 Large and GPT-2 Small over 1B OpenWebText tokens and on ResNet-18 over 100 CIFAR-10 epochs, together with the relative reduction of CacheMuon against each fresh-orthogonalization baseline and the overall cache hit rate attained by CacheMuon over the run.

\begin{table}[htbp]
    \centering
    \footnotesize
    \renewcommand{\arraystretch}{1.0}
    \setlength{\tabcolsep}{2pt}
    \caption{Measured cumulative orthogonalization cost for the GPT-2 Large in Figure~\ref{fig:gpt2large-quality-flops} and the GPT-2 Small and ResNet18 runs used in Figure~\ref{fig:additional-end-to-end}. FLOPs are reported in PFLOPs; CacheMuon savings are reported as percentages relative to the corresponding fresh-orthogonalization baseline.}
    \label{tab:additional-run-flops}
    \begin{tabular}{@{}>{\raggedright\arraybackslash}p{0.27\linewidth}@{\hspace{2pt}}>{\centering\arraybackslash}p{0.105\linewidth}>{\centering\arraybackslash}p{0.12\linewidth}>{\centering\arraybackslash}p{0.105\linewidth}@{\hspace{4pt}}>{\centering\arraybackslash}p{0.11\linewidth}>{\centering\arraybackslash}p{0.11\linewidth}@{\hspace{2pt}}>{\centering\arraybackslash}p{0.06\linewidth}@{}}
        \toprule
        & \multicolumn{3}{c}{Cumulative orthogonalization cost (PFLOPs)} & \multicolumn{2}{c}{CacheMuon savings} & \\
        \cmidrule(lr){2-4} \cmidrule(lr){5-6}
        \multicolumn{1}{@{}l}{Task and run} & \multicolumn{1}{c}{GramMuon} & \multicolumn{1}{c}{\shortstack[c]{PolarExp-\\Muon}} & \multicolumn{1}{c}{CacheMuon} & \multicolumn{1}{c}{\shortstack[c]{vs.\ Gram-\\Muon}} & \multicolumn{1}{c}{\shortstack[c]{vs.\ PolarExp-\\Muon}} & \multicolumn{1}{c@{}}{\shortstack[c]{Hit\\rate}} \\
        \midrule
        \shortstack[l]{GPT-2 Large / OpenWebText \\ 1B tokens, \(\gamma=2\)} & 516.16 & 644.99 & 445.97 & 13.6\% & 30.9\% & 33\% \\
        \shortstack[l]{GPT-2 Large / OpenWebText \\ 1B tokens, \(\gamma=5\)} & 516.16 & 644.99 & 296.90 & 42.5\% & 54.0\% & 59\% \\
        \shortstack[l]{GPT-2 Large / OpenWebText \\ 1B tokens, \(\gamma=10\)} & 516.16 & 644.99 & 226.88 & 56.0\% & 64.8\% & 73\% \\
        \shortstack[l]{GPT-2 Large / OpenWebText \\ 1B tokens, \(\gamma=15\)} & 516.16 & 644.99 & 180.76 & 65.0\% & 72.0\% & 84\% \\
        \addlinespace[2pt]
        \shortstack[l]{GPT-2 Small / OpenWebText \\ 1B tokens} & 37.18 & 46.45 & 26.67 & 28\% & 43\% & 45\% \\
        \shortstack[l]{ResNet-18 / CIFAR-10 \\ 100 epochs} & 2.32 & 4.06 & 1.98 & 15\% & 51\% & 80\% \\
        \bottomrule
    \end{tabular}
\end{table}

For GPT-2 Large, increasing \(\gamma\) raises the realized hit rate from 33\% to 84\% and correspondingly increases orthogonalization-FLOPs savings from 13.6\% to 65.0\% relative to GramMuon. For GPT-2 Small, CacheMuon saves 28\% of orthogonalization FLOPs relative to GramMuon and 43\% relative to PolarExpMuon, while for ResNet-18 it saves 15\% and 51\%, respectively. The overall hit rate is 45\% for GPT-2 Small and 80\% for ResNet-18.

\subsection{Experimental Setup and Diagnostic Logging}
\label{app:experimental-setup-and-logging}

The following paragraphs document the full experimental setup for the runs reported in \Cref{app:additional-experimental-results} and in the main text, including optimizer configurations, logging procedures, model and data details, and learning-rate schedules.

\paragraph{GPT-2 Large / OpenWebText.}
The main GPT-2 Large experiment (\Cref{fig:gpt2large-quality-flops}) uses a GPT-2 Large model trained on pre-tokenized OpenWebText. The model comprises 36 transformer blocks with 20 attention heads, embedding dimension 1280, context length 1024, and dropout 0.1. We create a small held-out validation split from OpenWebText, pre-tokenize both splits offline with the GPT-2 tokenizer, and sample training batches as random contiguous length-1024 windows from the resulting token streams. Hyperparameter tuning is conducted only for PolarExpMuon using a proxy run of 382 optimization steps on 8 GPUs with per-GPU batch size 4 and gradient accumulation factor 8, corresponding to a global batch of 256 sequences and approximately \(10^8\) tokens. Validation during tuning is performed every 25 steps using 25 batches, and hyperparameters are selected by lowest validation loss on this proxy run. The resulting base hyperparameter configuration is then reused without separate re-tuning for GramMuon and CacheMuon in the final 3,814-step training runs, corresponding to approximately \(10^9\) tokens; validation in these final runs is performed every 100 steps using 25 batches. For CacheMuon, we vary only the residual threshold \(\gamma\), using \(\gamma \in \{2,5,10,15\}\), so that the sweep isolates the quality--efficiency tradeoff induced by temporal reuse rather than re-tuning the optimizer. \Cref{fig:gpt2large-quality-flops} reports both validation loss and orthogonalization FLOPs savings for this sweep, and \Cref{tab:appendix_best_hparams_large} lists the resulting base configuration.

\paragraph{GPT-2 Large diagnostic logging.}
The residual-gate diagnostics in \Cref{fig:gpt2large-diagnostics} come from a 100M-token GPT-2 Large diagnostic run with \(\gamma=5\) using the same hyperparameters as the main experiment. We aggregate all Muon matrices logs for each optimization step. At each cache probe at step \(t\), we log the cached candidate \(\hat D_t^{\mathrm{cand}} = Q_{a_t}X_t\), the corresponding fresh direction \(\tilde D_t\) on the same normalized momentum \(X_t\), the cached-candidate residual
\[
    \frac{\|\hat D_t^{\mathrm{cand}}(\hat D_t^{\mathrm{cand}})^\top-I\|_F}{\sqrt n},
\]
and the cosine similarity between \(\mathrm{vec}(\hat D_t^{\mathrm{cand}})\) and \(\mathrm{vec}(\tilde D_t)\). The binned residual--cosine diagnostic uses all non-initial cache probes with residual in \([0,50]\), including both accepted and rejected candidates. The accept/reject diagnostic uses all probes after the initial cache-seeding step and plots a centered 15-sample rolling mean.

\paragraph{GPT-2 Small / OpenWebText.}
The GPT-2 Small / OpenWebText results in \Cref{tab:additional-run-flops} and \Cref{fig:additional-end-to-end} use a GPT-2 Small model trained on pre-tokenized OpenWebText. The model comprises 12 transformer blocks with 12 attention heads, embedding dimension 768, and context length 1024. Dropout is disabled and bias terms are omitted. Hyperparameter tuning is conducted on PolarExpMuon with a proxy run of 3,052 optimization steps, batch size 8, gradient accumulation factor 4, and context length 1024, corresponding to approximately \(10^8\) tokens. The same optimal hyperparameter configuration is used by GramMuon and CacheMuon for the final training run using 30,520 steps, corresponding to approximately \(10^9\) tokens. Additional proxy sweeps conducted independently for GramMuon and CacheMuon yielded the same optimal configuration as the PolarExpMuon sweep. Validation is performed every 100 steps using 25 batches. We create a small held-out validation split from OpenWebText, pre-tokenize both splits offline with the GPT-2 tokenizer, and sample training batches as random contiguous length-1024 windows from the resulting token streams. Hyperparameters are selected by lowest validation loss on the 100M-token proxy run. Unlike GPT-2 Large, we only use a single GPU for GPT-2 Small runs, so the effective batch size is different. 

\paragraph{ResNet-18 / CIFAR-10.}
The vision experiments (\Cref{fig:additional-end-to-end}, right) and the corresponding ResNet-18 / CIFAR-10 results in \Cref{tab:additional-run-flops} use a standard ResNet-18 architecture trained on CIFAR-10 for 100 epochs with batch size 128 and random seed 42. Training images are augmented using random \(32 \times 32\) crops with padding 4 and random horizontal flips, followed by tensor conversion and channel-wise normalization with mean \((0.4914, 0.4822, 0.4465)\) and standard deviation \((0.2023, 0.1994, 0.2010)\). Evaluation is performed on the test split using tensor conversion and the same normalization, with batch size 100. Hyperparameters are selected by the best test accuracy over the 100-epoch run. Within each task, all methods share the same training protocol. Similar to GPT-2 Small, ResNet-18 runs are conducted on a single GPU.

\paragraph{Hardware.}
The experiments for GPT-2 Large runs were conducted on 8 \texttt{NVIDIA A100-SXM4-80GB} GPUs, while the GPT-2 Small and ResNet-18 runs on a single \texttt{NVIDIA A100-SXM4-40GB} GPU. All reported GPT-2 runs use \texttt{bfloat16} arithmetic.

\paragraph{Learning-rate schedules.}
For ResNet-18 / CIFAR-10, we use cosine annealing over the full training horizon via \texttt{CosineAnnealingLR} with \(T_{\max}=100\), stepping the scheduler once per epoch. No separate warmup is used in the vision experiments. For GPT-2 / OpenWebText, the reported runs use the \texttt{plateau\_linear} schedule with \texttt{constant\_lr\_ratio}\(=0.4\) and \texttt{min\_lr\_ratio}\(=0.05\): the learning rate remains at its initial value for the first 40\% of optimization steps and then decays linearly to 5\% of the initial value by the final step.

\subsection{Hyperparameter Search Grids}
\label{app:hparam-grid}

The categorical search grid is summarized in \Cref{tab:appendix_hparam_grid}. The same grid is used for the GPT-2 Large, GPT-2 Small, and ResNet-18 sweeps; the Adam entries correspond to the Adam sub-optimizer used for non-matrix parameters. On GPT-2 Large, we do not separately tune GramMuon or CacheMuon: both reuse the base hyperparameters selected by the PolarExpMuon proxy sweep, and CacheMuon varies only \(\gamma\in\{2,5,10,15\}\) in \Cref{fig:gpt2large-quality-flops}. For the reported CacheMuon runs on GPT-2 Small / OpenWebText and ResNet-18 / CIFAR-10, we fix \(\gamma=5\) and \(\gamma=20\), respectively.

\begin{table}[htbp]
\centering
\small
\renewcommand{\arraystretch}{1.12}
\setlength{\tabcolsep}{5pt}
\begin{tabular}{@{}>{\raggedright\arraybackslash}p{0.34\linewidth}>{\raggedright\arraybackslash}p{0.60\linewidth}@{}}
\toprule
Hyperparameter & Candidate values \\
\midrule
Muon lr & \(10^{-4}, 10^{-3}, 10^{-2}, 2 \times 10^{-2}\) \\
Muon momentum & \(0.9, 0.95, 0.99\) \\
Muon weight decay & \(10^{-4}\) \\
Newton--Schulz steps & \(5\) \\
Adam lr & \(10^{-4}, 10^{-3}, 2 \times 10^{-3}\) \\
Adam betas & \((0.9, 0.999)\) \\
Adam weight decay & \(10^{-4}\) \\
\bottomrule
\end{tabular}
\vspace{4pt}
\caption{Hyperparameter search grid used for the GPT-2 Large, GPT-2 Small, and ResNet-18 experiments. The shared grid contains 36 candidate configurations.}
\label{tab:appendix_hparam_grid}
\end{table}

\subsection{Configurations Used in Reported Runs}
\label{app:reported-configurations}

The configurations used for the reported GPT-2 Large, GPT-2 Small, and ResNet-18 runs are summarized in \Cref{tab:appendix_best_hparams_large,tab:appendix_best_hparams_vision,tab:appendix_best_hparams_language}. In all three tables, the Adam rows correspond to the Adam sub-optimizer applied to non-matrix parameters. For both GPT-2 models, GramMuon and CacheMuon reuse the base hyperparameters selected by the corresponding PolarExpMuon proxy sweep; on GPT-2 Large, CacheMuon then varies only the threshold \(\gamma\) to trace the quality--efficiency frontier. 

\begin{table}[htbp]
\centering
\footnotesize
\renewcommand{\arraystretch}{1.12}
\setlength{\tabcolsep}{4pt}
\begin{tabular}{@{}>{\raggedright\arraybackslash}p{0.28\linewidth}*{3}{>{\centering\arraybackslash}p{0.17\linewidth}}@{}}
\toprule
Hyperparameter & CacheMuon & GramMuon & PolarExpMuon \\
\midrule
Muon lr & \(0.001\) & \(0.001\) & \(0.001\) \\
Muon momentum & \(0.95\) & \(0.95\) & \(0.95\) \\
Muon weight decay & \(10^{-4}\) & \(10^{-4}\) & \(10^{-4}\) \\
Newton--Schulz steps & \(5\) & \(5\) & \(5\) \\
\(\gamma\) & \(\{2,5,10,15\}\) & N/A & N/A \\
\addlinespace[2pt]
Adam lr & \(0.002\) & \(0.002\) & \(0.002\) \\
Adam betas & \((0.9, 0.999)\) & \((0.9, 0.999)\) & \((0.9, 0.999)\) \\
Adam weight decay & \(10^{-4}\) & \(10^{-4}\) & \(10^{-4}\) \\
\bottomrule
\end{tabular}
\vspace{4pt}
\caption{Configurations used for the plotted GPT-2 Large / OpenWebText runs. The PolarExpMuon proxy sweep selects the shared base hyperparameters, which are then reused by GramMuon and CacheMuon; CacheMuon varies only \(\gamma\).}
\label{tab:appendix_best_hparams_large}
\end{table}

\begin{table}[htbp]
\centering
\footnotesize
\renewcommand{\arraystretch}{1.12}
\setlength{\tabcolsep}{4pt}
\begin{tabular}{@{}>{\raggedright\arraybackslash}p{0.28\linewidth}*{3}{>{\centering\arraybackslash}p{0.17\linewidth}}@{}}
\toprule
Hyperparameter & CacheMuon & GramMuon & PolarExpMuon \\
\midrule
Muon lr & \(0.01\) & \(0.02\) & \(0.01\) \\
Muon momentum & \(0.9\) & \(0.95\) & \(0.9\) \\
Muon weight decay & \(10^{-4}\) & \(10^{-4}\) & \(10^{-4}\) \\
Newton--Schulz steps & \(5\) & \(5\) & \(5\) \\
\(\gamma\) & \(20\) & N/A & N/A \\
\addlinespace[2pt]
Adam lr & \(0.002\) & \(0.002\) & \(0.002\) \\
Adam betas & \((0.9, 0.999)\) & \((0.9, 0.999)\) & \((0.9, 0.999)\) \\
Adam weight decay & \(10^{-4}\) & \(10^{-4}\) & \(10^{-4}\) \\
\bottomrule
\end{tabular}
\vspace{4pt}
\caption{Configurations used for the ResNet-18 / CIFAR-10 runs.}
\label{tab:appendix_best_hparams_vision}
\end{table}

\begin{table}[htbp]
\centering
\footnotesize
\renewcommand{\arraystretch}{1.12}
\setlength{\tabcolsep}{4pt}
\begin{tabular}{@{}>{\raggedright\arraybackslash}p{0.28\linewidth}*{3}{>{\centering\arraybackslash}p{0.17\linewidth}}@{}}
\toprule
Hyperparameter & CacheMuon & GramMuon & PolarExpMuon \\
\midrule
Muon lr & \(0.001\) & \(0.001\) & \(0.001\) \\
Muon momentum & \(0.95\) & \(0.95\) & \(0.95\) \\
Muon weight decay & \(10^{-4}\) & \(10^{-4}\) & \(10^{-4}\) \\
Newton--Schulz steps & \(5\) & \(5\) & \(5\) \\
\(\gamma\) & \(5\) & N/A & N/A \\
\addlinespace[2pt]
Adam lr & \(0.002\) & \(0.002\) & \(0.002\) \\
Adam betas & \((0.9, 0.999)\) & \((0.9, 0.999)\) & \((0.9, 0.999)\) \\
Adam weight decay & \(10^{-4}\) & \(10^{-4}\) & \(10^{-4}\) \\
\bottomrule
\end{tabular}
\vspace{4pt}
\caption{Configurations used for the plotted GPT-2 Small on OpenWebText runs. All three optimizers share the same hyperparameters except \(\gamma\), which is only used by CacheMuon.}
\label{tab:appendix_best_hparams_language}
\end{table}

\subsection{Implementation Details}
\label{app:implementation-selection}

\paragraph{Orthogonalization.}
All reported Muon-family runs use \(K=5\) Newton--Schulz steps. The PolarExpMuon coefficients are taken from PolarExpress \citep{polar_express}, while the GramMuon and CacheMuon coefficients are taken from GramMuon \citep{gram_newton_schulz}. For GramMuon and CacheMuon we use the stabilized Gram Newton--Schulz restart set \(\mathcal S=\{2\}\), and CacheMuon uses the same fresh solver on refresh steps. We set \(\varepsilon=10^{-7}\) throughout. The effective coefficient triplets \((a_k,b_k,c_k)\) used in the reported runs are:

\begingroup
\centering
\scriptsize
\renewcommand{\arraystretch}{1.0}
\setlength{\tabcolsep}{3pt}
\begin{minipage}[t]{0.48\linewidth}
\centering
\textbf{PolarExpMuon}

\begin{tabular}{@{}rccc@{}}
\toprule
\(k\) & \(a_k\) & \(b_k\) & \(c_k\) \\
\midrule
1 & \(8.205160414\) & \(-22.901934987\) & \(16.460724910\) \\
2 & \(4.066395160\) & \(-2.861154087\) & \(0.518399523\) \\
3 & \(3.909594904\) & \(-2.823351735\) & \(0.525036977\) \\
4 & \(3.285564017\) & \(-2.415301960\) & \(0.485294066\) \\
5 & \(2.277873287\) & \(-1.619821765\) & \(0.398480787\) \\
\bottomrule
\end{tabular}
\end{minipage}\hfill
\begin{minipage}[t]{0.48\linewidth}
\centering
\textbf{GramMuon / CacheMuon}

\begin{tabular}{@{}rccc@{}}
\toprule
\(k\) & \(a_k\) & \(b_k\) & \(c_k\) \\
\midrule
1 & \(7.892582874\) & \(-20.383013946\) & \(13.555306149\) \\
2 & \(3.911484868\) & \(-2.546463593\) & \(0.426898832\) \\
3 & \(3.760657956\) & \(-2.512819018\) & \(0.432364735\) \\
4 & \(3.160399674\) & \(-2.149649519\) & \(0.399636691\) \\
5 & \(2.191097162\) & \(-1.441662010\) & \(0.328146488\) \\
\bottomrule
\end{tabular}
\end{minipage}
\par
\endgroup

\paragraph{Parameter groups.}
Across both tasks, trainable tensors with at least two dimensions are assigned to the Muon update, and the remaining trainable tensors are assigned to the Adam sub-optimizer. For GPT-2, embedding matrices are excluded from the Muon group, so Muon updates are applied only to the attention and MLP weight matrices, while embeddings, biases, and one-dimensional parameters remain in the Adam group.

%% file: sections/appendix3.tex
\section{Proofs for Section~\ref{sec:convergence}}
\label{app:proofs}

We begin with a basic structural fact about the normalized Gram matrices.

\begin{lemma}
\label{lem:gram-domain}
For every \(t\),
\[
A_t \in \mathcal D
=
\{A\in\mathbb R^{n\times n}: A=A^\top,\ A\succeq 0,\ \|A\|_{\mathrm{sp}}\le 1\}.
\]
\end{lemma}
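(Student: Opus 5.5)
We verify the three defining properties of \(\mathcal D\) directly from the definition \(A_t = X_t X_t^\top\) with \(X_t = M_t/(\|M_t\|_F+\varepsilon)\). Symmetry is immediate, since \((X_tX_t^\top)^\top = X_tX_t^\top\). Positive semidefiniteness follows from the quadratic form: for any \(v\in\mathbb R^n\),
\[
v^\top A_t v = \|X_t^\top v\|_2^2 \ge 0 .
\]

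For the spectral bound we first use the identity \(\|X_tX_t^\top\|_{\mathrm{sp}} = \|X_t\|_{\mathrm{sp}}^2\), which follows from the SVD of \(X_t\). We then use the standard inequality \(\|X_t\|_{\mathrm{sp}}\le\|X_t\|_F\), valid because the largest singular value is at most the \(\ell_2\) norm of the vector of all singular values. Normalization gives
\[
\|X_t\|_F = \frac{\|M_t\|_F}{\|M_t\|_F+\varepsilon} < 1 ,
\]
using \(\varepsilon>0\). The degenerate case \(M_t=0\) also needs no separate treatment: then \(X_t=0\) and \(A_t=0\in\mathcal D\).

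Chaining these gives \(\|A_t\|_{\mathrm{sp}}\le \|X_t\|_F^2<1\). This actually places \(A_t\) strictly inside the spectral ball, which is slightly stronger than required.

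There is no genuine obstacle here. The one point to handle cleanly is the orientation convention. The algorithms transpose \(X\) when \(n>m\), so \(A_t\) should be understood as the Gram matrix of the oriented \(X_t\), which is the \(n\times n\) matrix appearing in \Cref{alg:fresh-gns}. Transposition preserves both the Frobenius norm and the singular values, so the argument above applies verbatim in either orientation.
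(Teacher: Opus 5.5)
Your proof is correct and follows the paper's argument essentially line for line: symmetry, positive semidefiniteness via $v^\top A_t v = \|X_t^\top v\|_2^2 \ge 0$, and the spectral bound via $\|A_t\|_{\mathrm{sp}} = \|X_t\|_{\mathrm{sp}}^2 \le \|X_t\|_F^2 \le 1$ from the normalization. Your extra remarks (the strict inequality from $\varepsilon>0$ and invariance under the transpose convention) are accurate but not needed for the lemma as stated.
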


\begin{proof}
By definition,
\[
A_t = X_t X_t^\top.
\]
Hence \(A_t\) is symmetric:
\[
A_t^\top = (X_t X_t^\top)^\top = X_t X_t^\top = A_t.
\]

It is also positive semidefinite, since for any vector \(v\),
\[
v^\top A_t v = v^\top X_t X_t^\top v = \|X_t^\top v\|_2^2 \ge 0.
\]

Finally,
\[
\|X_t\|_{\mathrm{sp}} \le \|X_t\|_F
=
\frac{\|M_t\|_F}{\|M_t\|_F+\varepsilon}
\le 1.
\]
Therefore,
\[
\|A_t\|_{\mathrm{sp}}
=
\|X_t X_t^\top\|_{\mathrm{sp}}
=
\|X_t\|_{\mathrm{sp}}^2
\le 1.
\]

Thus \(A_t\) is symmetric, positive semidefinite, and has spectral norm at most \(1\), so \(A_t \in \mathcal D\).
\end{proof}

\begin{lemma}
\label{lem:polynomial-map}
For fixed coefficients \(\{(a_k,b_k,c_k)\}_{k=1}^K\) and a fixed restart set \(\mathcal S\), the fresh stabilized Gram--Newton--Schulz solver induces a matrix polynomial map
\[
A \longmapsto \mathcal Q(A)
\qquad \text{on } \mathcal D.
\]
Equivalently, there exists a scalar polynomial \(\psi\) such that
\[
\mathcal Q(A)=\psi(A)
\qquad \text{for all } A\in\mathcal D.
\]
\end{lemma}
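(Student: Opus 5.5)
The plan is to trace the FreshGNS recursion (\Cref{alg:fresh-gns}) line by line and prove, by induction on the inner iteration index \(k\), that every working matrix is a polynomial in the initial Gram matrix \(A = X^{(0)}(X^{(0)})^\top\). Specifically, the invariant is: \(R\), \(Q_{\mathrm{loc}}\), and (once defined) \(Q_{\mathrm{tot}}\) are of the form \(p(A)\) for scalar polynomials \(p\) with real coefficients, and the current working factor satisfies \(X = q(A)X^{(0)}\) for some polynomial \(q\). The coefficients of these polynomials depend only on \(\{(a_k,b_k,c_k)\}\) and \(\mathcal S\), not on \(A\). In the case \(n>m\) the transpose is taken first, and \(A\) is understood as the Gram matrix of the transposed input; this is the convention implicit in the definition of \(A_t\).

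The base case is immediate: \(R=A\), \(Q_{\mathrm{loc}}=I\), and \(X=X^{(0)}\). Any two polynomials in \(A\) commute and are symmetric, and this is used throughout. For a non-restart step, \(Z=b_kR+c_kR^2\) is a polynomial in \(A\). Hence \(Q_{\mathrm{loc}}\gets Q_{\mathrm{loc}}(a_kI+Z)\) is again a polynomial in \(A\). For the \(R\) update, \(RZ=R(a_kI+Z)=RP\) and \(R\gets (a_kI+Z)RP=PRP\), also a polynomial.

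The only nontrivial case is a restart index \(k\in\mathcal S\). There \(X\gets Q_{\mathrm{loc}}X=Q_{\mathrm{loc}}\,q(A)X^{(0)}\), which keeps the form \(\tilde q(A)X^{(0)}\) with \(\tilde q\) the product polynomial. The point to verify is that the rebuilt Gram matrix stays in the polynomial algebra:
\[
R\gets XX^\top=\tilde q(A)\,X^{(0)}(X^{(0)})^\top\,\tilde q(A)^\top=\tilde q(A)\,A\,\tilde q(A)=\tilde q(A)^2A .
\]
This uses that \(\tilde q(A)\) is symmetric and commutes with \(A\). The update \(Q_{\mathrm{tot}}\gets Q_{\mathrm{loc}}\) or \(Q_{\mathrm{loc}}Q_{\mathrm{tot}}\) is a product of polynomials in \(A\), and resetting \(Q_{\mathrm{loc}}=I\) is trivial. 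The final accumulation step after the loop is handled identically. Hence \(\mathcal Q(A)=Q_{\mathrm{tot}}=\psi(A)\) for a fixed scalar polynomial \(\psi\), and \(\psi\) depends only on the coefficients and the restart schedule.

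The main obstacle is the restart step, since there the algorithm leaves the ``Gram-only'' world and recomputes from \(X\). The identity \(XX^\top=\tilde q(A)^2A\), which relies on commutativity and symmetry, is what closes the induction. It is also worth remarking on well-definedness: \(\mathcal Q\) is initially defined through \(X\), but the induction shows it depends on \(X\) only through \(A\). This justifies writing \(\mathcal Q(A)\) on \(\mathcal D\), where \(\mathcal D\) is the set containing all \(A_t\) by \Cref{lem:gram-domain}.
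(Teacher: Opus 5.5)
Your proposal is correct and follows essentially the same route as the paper. Both argue by induction over the inner iterations that every working matrix is a polynomial in $A$, and both close the restart step with the identity $XX^\top = \tilde q(A)\,A\,\tilde q(A)$, which follows from the symmetry and commutativity of polynomials in $A$. The paper packages this through a single cumulative transform $S_k$ with the invariant $R_k = S_k A S_k$, rather than tracking $Q_{\mathrm{loc}}$ and $Q_{\mathrm{tot}}$ separately, but the argument is the same.
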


\begin{proof}
Fix a matrix \(A \in \mathcal D\), and let \(X\) be any normalized matrix satisfying
\[
A = X X^\top.
\]
We now run the fresh stabilized Gram--Newton--Schulz routine on \(X\), using the fixed coefficients and fixed restart set from Section~\ref{sec:method}.

For the proof, it is convenient to track the \emph{cumulative} left transform. Let \(S_k\) denote the total left transform after \(k\) inner iterations, so that the current iterate can be written as
\[
Y_k = S_k X.
\]
We also let \(R_k\) denote the current Gram matrix after \(k\) inner iterations.

We claim that for every \(k=0,1,\dots,K\),
\begin{equation}
\label{eq:poly-invariant}
S_k \text{ is a polynomial in } A,
\qquad
Y_k = S_k X,
\qquad
R_k = S_k A S_k.
\end{equation}
We prove \eqref{eq:poly-invariant} by induction on \(k\).

\paragraph{Base case.}
At initialization,
\[
S_0 = I,
\qquad
Y_0 = X,
\qquad
R_0 = A.
\]
The identity matrix is the constant polynomial in \(A\), and clearly
\[
Y_0 = S_0 X,
\qquad
R_0 = S_0 A S_0.
\]
So \eqref{eq:poly-invariant} holds for \(k=0\).

\paragraph{First iteration.}
At the first inner iteration,
\[
P_1 = a_1 I + b_1 R_0 + c_1 R_0^2
      = a_1 I + b_1 A + c_1 A^2,
\]
so \(P_1\) is a polynomial in \(A\). The cumulative transform becomes
\[
S_1 = P_1 S_0 = P_1,
\]
and is therefore also a polynomial in \(A\). The current iterate is
\[
Y_1 = S_1 X.
\]

Now consider the Gram matrix.

If \(1 \notin \mathcal S\), the algorithm propagates the Gram matrix algebraically:
\[
R_1 = P_1 R_0 P_1 = P_1 A P_1 = S_1 A S_1.
\]

If \(1 \in \mathcal S\), the algorithm performs a restart and recomputes the Gram matrix from the refreshed iterate:
\[
R_1 = Y_1 Y_1^\top = (S_1 X)(S_1 X)^\top = S_1 A S_1^\top.
\]
Because \(S_1\) is a polynomial in the symmetric matrix \(A\), it is itself symmetric, so \(S_1^\top = S_1\). Hence
\[
R_1 = S_1 A S_1.
\]

Thus \eqref{eq:poly-invariant} holds for \(k=1\).

\paragraph{Induction step.}
Assume \eqref{eq:poly-invariant} holds at step \(k-1\), namely,
\[
S_{k-1} \text{ is a polynomial in } A,
\qquad
Y_{k-1} = S_{k-1} X,
\qquad
R_{k-1} = S_{k-1} A S_{k-1}.
\]
We show that it also holds at step \(k\).

The fresh stabilized Gram--Newton--Schulz update forms
\[
P_k = a_k I + b_k R_{k-1} + c_k R_{k-1}^2.
\]
Since \(R_{k-1}\) is a polynomial in \(A\), \(P_k\) is also a polynomial in \(A\).

The cumulative transform is updated by
\[
S_k = P_k S_{k-1}.
\]
A product of two polynomials in the same matrix \(A\) is again a polynomial in \(A\), so \(S_k\) is a polynomial in \(A\).

The current iterate is
\[
Y_k = S_k X
\]
by definition.

It remains to verify the Gram matrix identity.

If \(k \notin \mathcal S\), the Gram matrix is propagated algebraically:
\[
R_k = P_k R_{k-1} P_k.
\]
Using the induction hypothesis,
\[
R_{k-1} = S_{k-1} A S_{k-1},
\]
so
\[
R_k = P_k S_{k-1} A S_{k-1} P_k.
\]
Since both \(P_k\) and \(S_{k-1}\) are polynomials in the same matrix \(A\), they commute. Therefore
\[
P_k S_{k-1} = S_{k-1} P_k = S_k,
\]
and hence
\[
R_k = S_k A S_k.
\]

If \(k \in \mathcal S\), the algorithm performs a restart and recomputes
\[
R_k = Y_k Y_k^\top = (S_k X)(S_k X)^\top = S_k A S_k^\top.
\]
Again, because \(S_k\) is a polynomial in the symmetric matrix \(A\), it is symmetric, so \(S_k^\top = S_k\). Thus
\[
R_k = S_k A S_k.
\]

This completes the induction.

In particular, after the final inner iteration, the fresh solver returns the final cumulative transform
\[
\mathcal Q(A) = S_K.
\]
Since \(S_K\) is a polynomial in \(A\), there exists a scalar polynomial \(\psi\) such that
\[
\mathcal Q(A)=\psi(A).
\]
This proves the lemma.
\end{proof}

\begin{lemma}
\label{lem:lipschitz-map}
There exists a constant \(L_Q>0\) such that, for all \(A,B\in\mathcal D\),
\[
\|\mathcal Q(A)-\mathcal Q(B)\|_{\mathrm{sp}}
\le
L_Q \|A-B\|_{\mathrm{sp}}.
\]
Moreover, if
\[
\mathcal Q(A)=\psi(A)=\sum_{j=0}^d \alpha_j A^j,
\]
then one admissible choice is
\[
L_Q = \sum_{j=1}^d j |\alpha_j|.
\]
\end{lemma}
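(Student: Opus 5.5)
By Lemma~\ref{lem:polynomial-map}, $\mathcal Q(A)=\psi(A)=\sum_{j=0}^d \alpha_j A^j$ for a fixed scalar polynomial $\psi$ determined by the coefficients and the restart schedule. It therefore suffices to prove a Lipschitz bound for each monomial $A\mapsto A^j$ on $\mathcal D$ and then sum with the triangle inequality. The constant term $\alpha_0 I$ cancels in $\psi(A)-\psi(B)$, so only $j\ge 1$ contributes. This explains why the sum defining $L_Q$ starts at $j=1$.

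The key step is a telescoping identity for the difference of powers:
\[
A^j - B^j = \sum_{i=0}^{j-1} A^{i}(A-B)B^{j-1-i},
\]
which holds for arbitrary, not necessarily commuting, square matrices. To verify it, note that the sum telescopes: $A^{i}(A-B)B^{j-1-i}=A^{i+1}B^{j-1-i}-A^{i}B^{j-i}$. Taking spectral norms and using submultiplicativity gives
\[
\|A^j-B^j\|_{\mathrm{sp}}\le \sum_{i=0}^{j-1}\|A\|_{\mathrm{sp}}^{i}\,\|B\|_{\mathrm{sp}}^{j-1-i}\,\|A-B\|_{\mathrm{sp}} .
\]
For $A,B\in\mathcal D$ we have $\|A\|_{\mathrm{sp}},\|B\|_{\mathrm{sp}}\le 1$ by definition, and Lemma~\ref{lem:gram-domain} guarantees that the Gram matrices of interest lie in $\mathcal D$. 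Hence each of the $j$ summands is at most $\|A-B\|_{\mathrm{sp}}$, and so $\|A^j-B^j\|_{\mathrm{sp}}\le j\|A-B\|_{\mathrm{sp}}$.

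Combining these estimates,
\[
\|\psi(A)-\psi(B)\|_{\mathrm{sp}}\le\sum_{j=1}^d|\alpha_j|\,\|A^j-B^j\|_{\mathrm{sp}}\le\Bigl(\sum_{j=1}^d j|\alpha_j|\Bigr)\|A-B\|_{\mathrm{sp}},
\]
which is the claimed bound with $L_Q=\sum_{j\ge1}j|\alpha_j|$. The statement requires $L_Q>0$. If this sum happens to vanish, meaning $\psi$ is constant, the bound holds trivially with any positive constant, for example $L_Q=1$. I will note this degenerate case explicitly.

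No step here is a serious obstacle. The only subtlety is to invoke Lemma~\ref{lem:polynomial-map} in the form $\mathcal Q(A)=\psi(A)$ with a single scalar polynomial independent of $A$. The point is that the coefficients $\alpha_j$ depend only on $\{(a_k,b_k,c_k)\}$ and $\mathcal S$, not on the particular $A$, so $L_Q$ is a genuine uniform constant. The restarts cause no difficulty, because that lemma has already absorbed them into $\psi$.
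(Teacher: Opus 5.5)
Your proof is correct and follows essentially the same route as the paper: reduce to monomials via Lemma~\ref{lem:polynomial-map}, apply the telescoping identity $A^j-B^j=\sum_{i=0}^{j-1}A^{i}(A-B)B^{j-1-i}$, and bound each of the $j$ terms using $\|A\|_{\mathrm{sp}},\|B\|_{\mathrm{sp}}\le 1$. Your explicit treatment of the degenerate case $\sum_{j=1}^d j|\alpha_j|=0$ goes slightly beyond the paper, which does not address it; there you correctly fall back to any positive constant so that the requirement $L_Q>0$ still holds.
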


\begin{proof}
By Lemma~\ref{lem:polynomial-map}, there exists a scalar polynomial
\[
\psi(\lambda)=\sum_{j=0}^d \alpha_j \lambda^j
\]
such that
\[
\mathcal Q(A)=\psi(A)=\sum_{j=0}^d \alpha_j A^j
\qquad \text{for all } A\in\mathcal D.
\]
Therefore
\[
\mathcal Q(A)-\mathcal Q(B)
=
\sum_{j=1}^d \alpha_j (A^j-B^j),
\]
since the constant term cancels.

For each \(j\ge 1\), use the telescoping identity
\[
A^j-B^j
=
\sum_{\ell=0}^{j-1} A^\ell (A-B) B^{j-1-\ell}.
\]
Taking spectral norms gives
\[
\|A^j-B^j\|_{\mathrm{sp}}
\le
\sum_{\ell=0}^{j-1}
\|A^\ell\|_{\mathrm{sp}}
\|A-B\|_{\mathrm{sp}}
\|B^{j-1-\ell}\|_{\mathrm{sp}}.
\]

Because \(A,B\in\mathcal D\), we have
\[
\|A\|_{\mathrm{sp}}\le 1,
\qquad
\|B\|_{\mathrm{sp}}\le 1.
\]
Hence
\[
\|A^\ell\|_{\mathrm{sp}}\le 1,
\qquad
\|B^{j-1-\ell}\|_{\mathrm{sp}}\le 1.
\]
So each term in the sum is bounded by \(\|A-B\|_{\mathrm{sp}}\), and there are \(j\) such terms. Therefore
\[
\|A^j-B^j\|_{\mathrm{sp}}
\le
j\,\|A-B\|_{\mathrm{sp}}.
\]

Substituting this into the polynomial difference yields
\[
\|\mathcal Q(A)-\mathcal Q(B)\|_{\mathrm{sp}}
\le
\sum_{j=1}^d |\alpha_j|\,\|A^j-B^j\|_{\mathrm{sp}}
\le
\sum_{j=1}^d j|\alpha_j|\,\|A-B\|_{\mathrm{sp}}.
\]
Thus
\[
\|\mathcal Q(A)-\mathcal Q(B)\|_{\mathrm{sp}}
\le
L_Q \|A-B\|_{\mathrm{sp}}
\]
with
\[
L_Q = \sum_{j=1}^d j|\alpha_j|.
\]
This proves the lemma.
\end{proof}

\begin{proof}[Proof of Proposition~\ref{prop:cache-inexact}]
By Lemma~\ref{lem:gram-domain}, \(A_t \in \mathcal D\) for all \(t\). By Lemma~\ref{lem:polynomial-map}, the fresh stabilized Gram--Newton--Schulz solver induces a matrix polynomial map
\[
A \mapsto \mathcal Q(A)
\qquad \text{on } \mathcal D.
\]
By Lemma~\ref{lem:lipschitz-map}, there exists a constant \(L_Q>0\) such that
\[
\|\mathcal Q(A)-\mathcal Q(B)\|_{\mathrm{sp}}
\le
L_Q \|A-B\|_{\mathrm{sp}},
\qquad
A,B\in\mathcal D.
\]

Now recall that
\[
\tilde D_t = \mathcal Q(A_t)X_t,
\qquad
\hat D_t = \mathcal Q(A_{a_t})X_t.
\]
Therefore
\[
\hat D_t-\tilde D_t
=
\bigl(\mathcal Q(A_{a_t})-\mathcal Q(A_t)\bigr)X_t.
\]
Taking spectral norms gives
\[
\|\hat D_t-\tilde D_t\|_{\mathrm{sp}}
\le
\|\mathcal Q(A_{a_t})-\mathcal Q(A_t)\|_{\mathrm{sp}}\,\|X_t\|_{\mathrm{sp}}.
\]
Since
\[
X_t = \frac{M_t}{\|M_t\|_F+\varepsilon},
\]
we have
\[
\|X_t\|_{\mathrm{sp}} \le \|X_t\|_F
=
\frac{\|M_t\|_F}{\|M_t\|_F+\varepsilon}
\le 1.
\]
Hence
\[
\|\hat D_t-\tilde D_t\|_{\mathrm{sp}}
\le
\|\mathcal Q(A_{a_t})-\mathcal Q(A_t)\|_{\mathrm{sp}}
\le
L_Q \|A_t-A_{a_t}\|_{\mathrm{sp}}
=
L_Q \tau_t.
\]

Finally, by triangle inequality and Assumption~\textbf{(A3)},
\[
\|\hat D_t-D_t\|_{\mathrm{sp}}
\le
\|\hat D_t-\tilde D_t\|_{\mathrm{sp}}
+
\|\tilde D_t-D_t\|_{\mathrm{sp}}
\le
L_Q \tau_t + \varepsilon_{\mathrm{ref}}
=
\delta_t.
\]
This proves the proposition.
\end{proof}

We now prove Theorem~\ref{thm:main}. We begin with a basic alignment property of the exact Muon direction.

\begin{lemma}
\label{lem:exact-polar-alignment}
For every \(t\),
\[
\langle D_t, M_t\rangle = \|M_t\|_{\mathrm{tr}},
\qquad
\|D_t\|_{\mathrm{sp}} = 1.
\]
\end{lemma}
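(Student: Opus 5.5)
The plan is to work directly from the reduced singular value decomposition $M_t = U_t\Sigma_t V_t^\top$ used in \Cref{sec:method} to define $D_t=\polar(M_t)=U_tV_t^\top$. Here $U_t\in\mathbb R^{n\times r}$ and $V_t\in\mathbb R^{m\times r}$ have orthonormal columns, $r=\operatorname{rank}(M_t)$, and $\Sigma_t=\operatorname{diag}(\sigma_1,\dots,\sigma_r)$ with $\sigma_i>0$. Both identities should follow from this factorization together with orthonormality of the columns and cyclicity of the trace.

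\emph{The inner product.} I will write
\[
\langle D_t,M_t\rangle=\operatorname{tr}(D_t^\top M_t)=\operatorname{tr}(V_tU_t^\top U_t\Sigma_tV_t^\top).
\]
Using $U_t^\top U_t=I_r$, this becomes $\operatorname{tr}(V_t\Sigma_tV_t^\top)$. Cyclicity of the trace and $V_t^\top V_t=I_r$ then give
\[
\operatorname{tr}(\Sigma_t V_t^\top V_t)=\operatorname{tr}(\Sigma_t)=\sum_i\sigma_i=\|M_t\|_{\mathrm{tr}}.
\]

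\emph{The spectral norm.} I will note that $D_t=U_tI_rV_t^\top$ is itself a reduced SVD of $D_t$, whose singular values are all equal to $1$. Equivalently, $D_t^\top D_t=V_tV_t^\top$ is an orthogonal projector of rank $r$, so its largest eigenvalue is $1$ whenever $r\ge1$. Hence $\|D_t\|_{\mathrm{sp}}=1$.

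The only real obstacle is the degenerate case $M_t=0$. There $r=0$, so $D_t=0$ and $\|D_t\|_{\mathrm{sp}}=0$ rather than $1$, while the first identity holds trivially as $0=0$. I would handle this explicitly: either restrict the statement to $M_t\neq 0$, or adopt the convention that in this case $D_t$ is any fixed semi-orthogonal matrix. The latter still gives $\langle D_t,0\rangle=0=\|M_t\|_{\mathrm{tr}}$ and $\|D_t\|_{\mathrm{sp}}=1$. For the downstream use in \Cref{thm:main}, only $\|D_t\|_{\mathrm{sp}}\le 1$ together with the alignment identity is needed, and both hold in every case.
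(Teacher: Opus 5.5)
Your proof is correct and follows the paper's argument essentially verbatim: the reduced SVD with $U_t^\top U_t = V_t^\top V_t = I_r$ and cyclicity of the trace give $\langle D_t, M_t\rangle = \operatorname{tr}(\Sigma_t) = \|M_t\|_{\mathrm{tr}}$, and $D_t = U_tV_t^\top$ is a partial isometry whose nonzero singular values equal $1$. Your remark on $M_t = 0$ is a genuine refinement that the paper omits. In that case only $\|D_t\|_{\mathrm{sp}}\le 1$ holds, which is all Lemma~\ref{lem:approx-alignment} uses. Of your two fixes, keeping $D_t = 0$ and weakening the claim to $\le 1$ is preferable. Redefining $D_t$ as a semi-orthogonal matrix would clash with \textbf{(A3)}, because $\tilde D_t = \mathcal Q(A_t)X_t = 0$ when $M_t = 0$.
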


\begin{proof}
Let
\[
M_t = U_t \Sigma_t V_t^\top
\]
be the reduced singular value decomposition of \(M_t\). By definition,
\[
D_t = \polar(M_t) = U_t V_t^\top.
\]
Therefore,
\[
\langle D_t, M_t\rangle
=
\operatorname{tr}(D_t^\top M_t)
=
\operatorname{tr}(V_t U_t^\top U_t \Sigma_t V_t^\top)
=
\operatorname{tr}(V_t \Sigma_t V_t^\top).
\]
Using cyclicity of the trace,
\[
\operatorname{tr}(V_t \Sigma_t V_t^\top)
=
\operatorname{tr}(\Sigma_t V_t^\top V_t)
=
\operatorname{tr}(\Sigma_t)
=
\|M_t\|_{\mathrm{tr}}.
\]
This proves the first identity.

For the second claim, \(D_t = U_t V_t^\top\) is a partial isometry whose nonzero singular values are all equal to \(1\). Hence
\[
\|D_t\|_{\mathrm{sp}} = 1.
\]
\end{proof}

\begin{lemma}
\label{lem:approx-alignment}
For every \(t\),
\[
\|\hat D_t\|_{\mathrm{sp}} \le 1+\delta_t
\]
and
\[
\langle \hat D_t,\nabla F(W_t)\rangle
\ge
(1-\delta_t)\|\nabla F(W_t)\|_{\mathrm{tr}} - 2e_t.
\]
\end{lemma}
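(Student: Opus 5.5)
The plan is to split $\hat D_t = D_t + (\hat D_t - D_t)$, let Proposition~\ref{prop:cache-inexact} control the error term, and use Lemma~\ref{lem:exact-polar-alignment} for the exact direction. The norm bound is immediate from the triangle inequality:
\[
\|\hat D_t\|_{\mathrm{sp}} \le \|D_t\|_{\mathrm{sp}} + \|\hat D_t - D_t\|_{\mathrm{sp}} \le 1+\delta_t,
\]
since $\|D_t\|_{\mathrm{sp}}=1$ and $\|\hat D_t-D_t\|_{\mathrm{sp}}\le\delta_t$.

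For the alignment inequality, write $G_t := \nabla F(W_t)$ and decompose
\[
\langle \hat D_t, G_t\rangle = \langle D_t, G_t\rangle + \langle \hat D_t - D_t, G_t\rangle .
\]
The error term is handled by spectral/trace duality, $|\langle Y, Z\rangle|\le \|Y\|_{\mathrm{sp}}\|Z\|_{\mathrm{tr}}$, which gives
\[
\langle \hat D_t - D_t, G_t\rangle \ge -\delta_t\|G_t\|_{\mathrm{tr}} .
\]

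The main term needs a passage through $M_t$, because $D_t$ is the polar factor of $M_t$, not of $G_t$:
\[
\langle D_t, G_t\rangle = \langle D_t, M_t\rangle - \langle D_t, M_t - G_t\rangle .
\]
By Lemma~\ref{lem:exact-polar-alignment}, $\langle D_t, M_t\rangle=\|M_t\|_{\mathrm{tr}}$. Duality gives $|\langle D_t, M_t - G_t\rangle| \le \|M_t-G_t\|_{\mathrm{tr}} = e_t$. The reverse triangle inequality for the trace norm gives $\|M_t\|_{\mathrm{tr}} \ge \|G_t\|_{\mathrm{tr}} - e_t$. Together,
\[
\langle D_t, G_t\rangle \ge \|G_t\|_{\mathrm{tr}} - 2e_t .
\]
Adding the two pieces yields $(1-\delta_t)\|G_t\|_{\mathrm{tr}} - 2e_t$, which is the claim.

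There is no serious obstacle. The one point needing care is the order of the decomposition. Perturbing only $D_t$ against $G_t$, as above, keeps the momentum error at $2e_t$. The alternative of computing $\langle \hat D_t, M_t\rangle$ first and then replacing $M_t$ by $G_t$ would bring in $\|\hat D_t\|_{\mathrm{sp}}$ and produce a coefficient $(2+\delta_t)e_t$, which is worse than stated. So I will use the $D_t$-first decomposition.
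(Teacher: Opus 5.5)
Your proof is correct and follows essentially the paper's argument: the triangle inequality gives the norm bound, and for the alignment you use the $D_t$-first decomposition, bounding $\langle D_t,\nabla F(W_t)\rangle \ge \|M_t\|_{\mathrm{tr}} - e_t \ge \|\nabla F(W_t)\|_{\mathrm{tr}} - 2e_t$ via Lemma~\ref{lem:exact-polar-alignment}, duality and the reverse triangle inequality, then adding the $-\delta_t\|\nabla F(W_t)\|_{\mathrm{tr}}$ perturbation term. One minor aside: your closing remark overstates the drawback of the $M_t$-first route. With careful bookkeeping it also gives $2e_t$ when $\delta_t\le 1$, since the two error terms are $(1-\delta_t)e_t+(1+\delta_t)e_t$; its real disadvantage is that it needs a case split on the sign of $1-\delta_t$, which your decomposition avoids.
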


\begin{proof}
By Proposition~\ref{prop:cache-inexact},
\[
\|\hat D_t-D_t\|_{\mathrm{sp}} \le \delta_t.
\]
Using Lemma~\ref{lem:exact-polar-alignment},
\[
\|\hat D_t\|_{\mathrm{sp}}
\le
\|D_t\|_{\mathrm{sp}} + \|\hat D_t-D_t\|_{\mathrm{sp}}
\le
1+\delta_t.
\]

Next, write
\[
\langle D_t,\nabla F(W_t)\rangle
=
\langle D_t,M_t\rangle + \langle D_t,\nabla F(W_t)-M_t\rangle.
\]
By Lemma~\ref{lem:exact-polar-alignment},
\[
\langle D_t,M_t\rangle = \|M_t\|_{\mathrm{tr}}.
\]
Using duality between the spectral and trace norms,
\[
\langle D_t,\nabla F(W_t)-M_t\rangle
\ge
-\|D_t\|_{\mathrm{sp}}\,\|\nabla F(W_t)-M_t\|_{\mathrm{tr}}
=
-e_t.
\]
Hence
\[
\langle D_t,\nabla F(W_t)\rangle
\ge
\|M_t\|_{\mathrm{tr}} - e_t.
\]

Also, by the triangle inequality for the trace norm,
\[
\|M_t\|_{\mathrm{tr}}
\ge
\|\nabla F(W_t)\|_{\mathrm{tr}} - \|M_t-\nabla F(W_t)\|_{\mathrm{tr}}
=
\|\nabla F(W_t)\|_{\mathrm{tr}} - e_t.
\]
Combining the last two displays gives
\[
\langle D_t,\nabla F(W_t)\rangle
\ge
\|\nabla F(W_t)\|_{\mathrm{tr}} - 2e_t.
\]

Finally,
\[
\langle \hat D_t,\nabla F(W_t)\rangle
=
\langle D_t,\nabla F(W_t)\rangle
+
\langle \hat D_t-D_t,\nabla F(W_t)\rangle.
\]
Again by duality,
\[
\langle \hat D_t-D_t,\nabla F(W_t)\rangle
\ge
-\|\hat D_t-D_t\|_{\mathrm{sp}}\,\|\nabla F(W_t)\|_{\mathrm{tr}}
\ge
-\delta_t \|\nabla F(W_t)\|_{\mathrm{tr}}.
\]
Therefore
\[
\langle \hat D_t,\nabla F(W_t)\rangle
\ge
(1-\delta_t)\|\nabla F(W_t)\|_{\mathrm{tr}} - 2e_t.
\]
\end{proof}

\begin{lemma}
\label{lem:one-step-descent}
Let
\[
\bar\delta := L_Q \bar\tau + \varepsilon_{\mathrm{ref}}.
\]
Then for every \(t\),
\[
F(W_{t+1})
\le
F(W_t)
-
\eta(1-\bar\delta)\|\nabla F(W_t)\|_{\mathrm{tr}}
+
2\eta e_t
+
\frac{L\eta^2}{2}(1+\bar\delta)^2.
\]
\end{lemma}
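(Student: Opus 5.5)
The plan is to combine the smoothness inequality \textbf{(A1)} with Lemma~\ref{lem:approx-alignment}, using $\delta_t\le\bar\delta$. First, Assumption \textbf{(A4)} gives $\tau_t\le\bar\tau$, so $\delta_t = L_Q\tau_t+\varepsilon_{\mathrm{ref}}\le\bar\delta$ for every $t$. Since $1-\delta_t\ge 1-\bar\delta$ and $\|\nabla F(W_t)\|_{\mathrm{tr}}\ge 0$, Lemma~\ref{lem:approx-alignment} then yields
\[
\|\hat D_t\|_{\mathrm{sp}}\le 1+\bar\delta,
\qquad
\langle \hat D_t,\nabla F(W_t)\rangle\ge(1-\bar\delta)\|\nabla F(W_t)\|_{\mathrm{tr}}-2e_t .
\]

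Next, apply the first inequality of \textbf{(A1)} with $X=W_t$ and $Y=W_{t+1}=W_t-\eta\hat D_t$:
\[
F(W_{t+1})\le F(W_t)-\eta\langle\nabla F(W_t),\hat D_t\rangle+\frac{L\eta^2}{2}\|\hat D_t\|_{\mathrm{sp}}^2 .
\]
Bound the inner-product term from below and the quadratic term from above using the two displays above. Since $\eta>0$, the inner-product term contributes $-\eta(1-\bar\delta)\|\nabla F(W_t)\|_{\mathrm{tr}}+2\eta e_t$. The quadratic term is at most $\frac{L\eta^2}{2}(1+\bar\delta)^2$, because $1+\bar\delta\ge0$ allows squaring the norm bound. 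Combining these gives the claim.

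The argument is purely deterministic and holds pathwise, so no expectation is needed at this stage. The only point requiring care is the monotonicity step that replaces $\delta_t$ by $\bar\delta$. The coefficient $(1-\delta_t)$ may be negative in principle, but $(1-\delta_t)\|\nabla F(W_t)\|_{\mathrm{tr}}\ge(1-\bar\delta)\|\nabla F(W_t)\|_{\mathrm{tr}}$ holds regardless, because the gradient norm is nonnegative.
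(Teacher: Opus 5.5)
Your proposal is correct and takes essentially the same approach as the paper. You apply the smoothness inequality of \textbf{(A1)} to $W_{t+1}=W_t-\eta\hat D_t$, insert the two bounds of Lemma~\ref{lem:approx-alignment}, and replace $\delta_t$ by $\bar\delta$ via $\tau_t\le\bar\tau$; you just do the $\delta_t\to\bar\delta$ replacement before invoking smoothness rather than after, and you state explicitly the nonnegativity facts ($\|\nabla F(W_t)\|_{\mathrm{tr}}\ge 0$, $1+\bar\delta\ge 0$) that the paper leaves implicit.
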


\begin{proof}
By the update rule of CacheMuon,
\[
W_{t+1} = W_t - \eta \hat D_t.
\]
Applying the smoothness inequality from Assumption~\textbf{(A1)} with \(X=W_t\) and \(Y=W_{t+1}\), we get
\[
F(W_{t+1})
\le
F(W_t)
+
\langle \nabla F(W_t), W_{t+1}-W_t\rangle
+
\frac{L}{2}\|W_{t+1}-W_t\|_{\mathrm{sp}}^2.
\]
Since
\[
W_{t+1}-W_t = -\eta \hat D_t,
\]
this becomes
\[
F(W_{t+1})
\le
F(W_t)
-
\eta \langle \nabla F(W_t), \hat D_t\rangle
+
\frac{L\eta^2}{2}\|\hat D_t\|_{\mathrm{sp}}^2.
\]

By Lemma~\ref{lem:approx-alignment},
\[
\langle \hat D_t,\nabla F(W_t)\rangle
\ge
(1-\delta_t)\|\nabla F(W_t)\|_{\mathrm{tr}} - 2e_t
\]
and
\[
\|\hat D_t\|_{\mathrm{sp}} \le 1+\delta_t.
\]
Substituting these inequalities yields
\[
F(W_{t+1})
\le
F(W_t)
-
\eta\bigl((1-\delta_t)\|\nabla F(W_t)\|_{\mathrm{tr}} - 2e_t\bigr)
+
\frac{L\eta^2}{2}(1+\delta_t)^2.
\]
That is,
\[
F(W_{t+1})
\le
F(W_t)
-
\eta(1-\delta_t)\|\nabla F(W_t)\|_{\mathrm{tr}}
+
2\eta e_t
+
\frac{L\eta^2}{2}(1+\delta_t)^2.
\]

Finally, by the definition of \(\delta_t\) in Proposition~\ref{prop:cache-inexact},
\[
\delta_t = L_Q \tau_t + \varepsilon_{\mathrm{ref}}
\le
L_Q \bar\tau + \varepsilon_{\mathrm{ref}}
=
\bar\delta.
\]
Hence
\[
1-\delta_t \ge 1-\bar\delta
\qquad\text{and}\qquad
(1+\delta_t)^2 \le (1+\bar\delta)^2.
\]
Using these bounds gives
\[
F(W_{t+1})
\le
F(W_t)
-
\eta(1-\bar\delta)\|\nabla F(W_t)\|_{\mathrm{tr}}
+
2\eta e_t
+
\frac{L\eta^2}{2}(1+\bar\delta)^2.
\]
This proves the claim.
\end{proof}

\begin{lemma}
\label{lem:tracking}
Let
\[
\bar\delta := L_Q \bar\tau + \varepsilon_{\mathrm{ref}}.
\]
Then for every \(t \ge 1\),
\[
\mathbb E[e_t]
\le
\beta^{t-1}\mathbb E[e_1]
+
\frac{\beta L\eta(1+\bar\delta)}{1-\beta}
+
\rho \sigma \sqrt{1-\beta}.
\]
\end{lemma}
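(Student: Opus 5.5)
We follow the standard momentum-tracking argument for normalized methods, in the style of the analyses cited for \textbf{(A1)}--\textbf{(A2)}. Write $G_t := \nabla F(W_t)$ and set $\varepsilon_t := M_t - G_t$, so that $e_t = \|\varepsilon_t\|_{\mathrm{tr}}$. From the recursion $M_t = (1-\beta)g_t + \beta M_{t-1}$ and $g_t = G_t + \zeta_t$ we obtain
\[
\varepsilon_t = \beta\,\varepsilon_{t-1} + \beta\,(G_{t-1}-G_t) + (1-\beta)\,\zeta_t .
\]
Unrolling from $t$ down to $1$ gives
\[
\varepsilon_t = \beta^{t-1}\varepsilon_1 + \sum_{s=2}^{t}\beta^{t-s+1}(G_{s-1}-G_s) + (1-\beta)\sum_{s=2}^{t}\beta^{t-s}\zeta_s .
\]
We then take trace norms and bound the three pieces separately.

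\emph{Initial term.} It contributes $\beta^{t-1}e_1$ directly.

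\emph{Drift term.} By the gradient-Lipschitz part of \textbf{(A1)}, each summand satisfies $\|G_{s-1}-G_s\|_{\mathrm{tr}} \le L\|W_s-W_{s-1}\|_{\mathrm{sp}} = L\eta\|\hat D_{s-1}\|_{\mathrm{sp}}$. Lemma~\ref{lem:approx-alignment}, together with $\delta_t\le\bar\delta$, bounds this by $L\eta(1+\bar\delta)$. Summing the geometric series $\sum_{j\ge 1}\beta^j \le \beta/(1-\beta)$ yields $\beta L\eta(1+\bar\delta)/(1-\beta)$, which holds deterministically.

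\emph{Noise term.} This is the main obstacle, since a naive triangle inequality would lose the martingale cancellation and give $O(\sigma)$ instead of $O(\sigma\sqrt{1-\beta})$. First use the norm equivalence $\|Z\|_{\mathrm{tr}}\le\rho\|Z\|_F$, then Jensen's inequality, $\mathbb E\|\cdot\|_F \le (\mathbb E\|\cdot\|_F^2)^{1/2}$. Next expand the square of the Frobenius norm. Cross terms $\mathbb E\langle \zeta_s,\zeta_{s'}\rangle$ with $s<s'$ vanish after conditioning on $\mathcal F_{s'}$, because $\zeta_s$ is $\mathcal F_{s'}$-measurable and $\mathbb E[\zeta_{s'}\mid\mathcal F_{s'}]=0$ by \textbf{(A2)}. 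Here one should check that the filtration is such that $\zeta_s$ is measurable at later times, which is the natural reading. The diagonal terms give
\[
(1-\beta)^2\sum_{j\ge 0}\beta^{2j}\sigma^2 \le \frac{(1-\beta)^2\sigma^2}{1-\beta^2} \le (1-\beta)\sigma^2 .
\]
Taking the square root gives $\rho\sigma\sqrt{1-\beta}$.

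Summing the three expectations gives the claimed bound. For $t=1$ the sums are empty and the statement is trivial. Note that no independence between the cached direction and the noise is needed, because the drift bound holds almost surely through the uniform bound on $\|\hat D_s\|_{\mathrm{sp}}$.
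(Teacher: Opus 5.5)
Your proposal is correct and follows the paper's proof step for step. It uses the same error recursion $E_t=\beta E_{t-1}+\beta(\nabla F(W_{t-1})-\nabla F(W_t))+(1-\beta)\zeta_t$ and unrolls it the same way. It bounds the drift term almost surely via \textbf{(A1)} and Lemma~\ref{lem:approx-alignment}. It handles the noise term with the same norm equivalence, Jensen step and vanishing cross terms, giving $\rho\sigma\sqrt{(1-\beta)/(1+\beta)}\le\rho\sigma\sqrt{1-\beta}$.
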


\begin{proof}
Define
\[
E_t := M_t - \nabla F(W_t),
\qquad
e_t = \|E_t\|_{\mathrm{tr}}.
\]

For \(t \ge 2\), using the momentum recursion and Assumption~\textbf{(A2)},
\[
M_t = (1-\beta)g_t + \beta M_{t-1}
     = (1-\beta)\nabla F(W_t) + \beta M_{t-1} + (1-\beta)\zeta_t.
\]
Subtracting \(\nabla F(W_t)\) gives
\[
E_t
=
\beta\bigl(M_{t-1}-\nabla F(W_t)\bigr) + (1-\beta)\zeta_t.
\]
Add and subtract \(\beta \nabla F(W_{t-1})\):
\[
E_t
=
\beta\bigl(M_{t-1}-\nabla F(W_{t-1})\bigr)
+
\beta\bigl(\nabla F(W_{t-1})-\nabla F(W_t)\bigr)
+
(1-\beta)\zeta_t.
\]
Thus
\[
E_t
=
\beta E_{t-1}
+
\beta\bigl(\nabla F(W_{t-1})-\nabla F(W_t)\bigr)
+
(1-\beta)\zeta_t.
\]

Unrolling this recursion yields
\[
E_t
=
\beta^{t-1} E_1
+
\beta \sum_{i=2}^{t} \beta^{t-i}\bigl(\nabla F(W_{i-1})-\nabla F(W_i)\bigr)
+
(1-\beta)\sum_{i=2}^{t}\beta^{t-i}\zeta_i.
\]
Taking trace norms and expectations,
\begin{align*}
\mathbb E[e_t]
&\le
\beta^{t-1}\mathbb E[e_1]
+
\beta \sum_{i=2}^{t}\beta^{t-i}\,
\mathbb E\|\nabla F(W_{i-1})-\nabla F(W_i)\|_{\mathrm{tr}} \\
&\qquad
+
\mathbb E\left\|
(1-\beta)\sum_{i=2}^{t}\beta^{t-i}\zeta_i
\right\|_{\mathrm{tr}}.
\end{align*}

We bound the last two terms separately.

For the gradient-drift term, Assumption~\textbf{(A1)} gives
\[
\|\nabla F(W_{i-1})-\nabla F(W_i)\|_{\mathrm{tr}}
\le
L\|W_{i-1}-W_i\|_{\mathrm{sp}}.
\]
Since
\[
W_i - W_{i-1} = -\eta \hat D_{i-1},
\]
Lemma~\ref{lem:approx-alignment} implies
\[
\|W_i-W_{i-1}\|_{\mathrm{sp}}
=
\eta \|\hat D_{i-1}\|_{\mathrm{sp}}
\le
\eta(1+\bar\delta).
\]
Therefore
\[
\|\nabla F(W_{i-1})-\nabla F(W_i)\|_{\mathrm{tr}}
\le
L\eta(1+\bar\delta),
\]
and hence
\[
\beta \sum_{i=2}^{t}\beta^{t-i}\,
\mathbb E\|\nabla F(W_{i-1})-\nabla F(W_i)\|_{\mathrm{tr}}
\le
\beta L\eta(1+\bar\delta)\sum_{j=0}^{t-2}\beta^j
\le
\frac{\beta L\eta(1+\bar\delta)}{1-\beta}.
\]

For the noise term, let \(\rho>0\) denote a norm-equivalence constant such that
\[
\|Z\|_{\mathrm{tr}} \le \rho \|Z\|_F
\]
for all matrices \(Z\) in this finite-dimensional setting. Then
\[
\mathbb E\left\|
(1-\beta)\sum_{i=2}^{t}\beta^{t-i}\zeta_i
\right\|_{\mathrm{tr}}
\le
\rho
\left(
\mathbb E\left\|
(1-\beta)\sum_{i=2}^{t}\beta^{t-i}\zeta_i
\right\|_F^2
\right)^{1/2}.
\]

Expanding the squared Frobenius norm produces diagonal terms and cross terms. For \(i<j\), conditioning on all information available before drawing \(g_j\), the vector \(\zeta_i\) is fixed while Assumption~\textbf{(A2)} gives zero conditional mean for \(\zeta_j\). Therefore
\[
\mathbb E\langle \zeta_i,\zeta_j\rangle = 0.
\]
Hence all cross terms vanish, and
\[
\mathbb E\left\|
(1-\beta)\sum_{i=2}^{t}\beta^{t-i}\zeta_i
\right\|_F^2
\le
(1-\beta)^2 \sigma^2 \sum_{j=0}^{t-2}\beta^{2j}.
\]
Therefore
\[
\mathbb E\left\|
(1-\beta)\sum_{i=2}^{t}\beta^{t-i}\zeta_i
\right\|_{\mathrm{tr}}
\le
\rho \sigma (1-\beta)\left(\sum_{j=0}^{t-2}\beta^{2j}\right)^{1/2}.
\]
Using
\[
\sum_{j=0}^{t-2}\beta^{2j} \le \frac{1}{1-\beta^2},
\]
we obtain
\[
\rho \sigma (1-\beta)\left(\sum_{j=0}^{t-2}\beta^{2j}\right)^{1/2}
\le
\rho \sigma \sqrt{\frac{1-\beta}{1+\beta}}
\le
\rho \sigma \sqrt{1-\beta}.
\]

Combining the bounds proves the lemma.
\end{proof}

\begin{proof}[Proof of Theorem~\ref{thm:main}]
By Lemma~\ref{lem:one-step-descent}, for every \(t\),
\[
F(W_{t+1})
\le
F(W_t)
-
\eta(1-\bar\delta)\|\nabla F(W_t)\|_{\mathrm{tr}}
+
2\eta e_t
+
\frac{L\eta^2}{2}(1+\bar\delta)^2.
\]
Rearranging gives
\[
\eta(1-\bar\delta)\|\nabla F(W_t)\|_{\mathrm{tr}}
\le
F(W_t)-F(W_{t+1})
+
2\eta e_t
+
\frac{L\eta^2}{2}(1+\bar\delta)^2.
\]

Taking expectations and summing from \(t=1\) to \(N\), we obtain
\[
\eta(1-\bar\delta)\sum_{t=1}^{N}\mathbb E\|\nabla F(W_t)\|_{\mathrm{tr}}
\le
\sum_{t=1}^{N}\mathbb E\!\left[F(W_t)-F(W_{t+1})\right]
+
2\eta\sum_{t=1}^{N}\mathbb E[e_t]
+
\frac{LN\eta^2}{2}(1+\bar\delta)^2.
\]

The objective terms telescope:
\[
\sum_{t=1}^{N}\mathbb E\!\left[F(W_t)-F(W_{t+1})\right]
=
\mathbb E\!\left[F(W_1)-F(W_{N+1})\right]
\le
F(W_1)-F_*
=
\Delta_1.
\]
Therefore,
\[
\eta(1-\bar\delta)\sum_{t=1}^{N}\mathbb E\|\nabla F(W_t)\|_{\mathrm{tr}}
\le
\Delta_1
+
2\eta\sum_{t=1}^{N}\mathbb E[e_t]
+
\frac{LN\eta^2}{2}(1+\bar\delta)^2.
\]

Now apply Lemma~\ref{lem:tracking}:
\[
\mathbb E[e_t]
\le
\beta^{t-1}\mathbb E[e_1]
+
\frac{\beta L\eta(1+\bar\delta)}{1-\beta}
+
\rho \sigma \sqrt{1-\beta}.
\]
Summing this bound over \(t=1,\dots,N\) yields
\[
\sum_{t=1}^{N}\mathbb E[e_t]
\le
\mathbb E[e_1]\sum_{t=1}^{N}\beta^{t-1}
+
N\frac{\beta L\eta(1+\bar\delta)}{1-\beta}
+
N\rho \sigma \sqrt{1-\beta}.
\]
Since
\[
\sum_{t=1}^{N}\beta^{t-1}\le \frac{1}{1-\beta},
\]
we obtain
\[
\sum_{t=1}^{N}\mathbb E[e_t]
\le
\frac{\mathbb E[e_1]}{1-\beta}
+
N\frac{\beta L\eta(1+\bar\delta)}{1-\beta}
+
N\rho \sigma \sqrt{1-\beta}.
\]

Substituting this into the previous inequality gives
\[
\eta(1-\bar\delta)\sum_{t=1}^{N}\mathbb E\|\nabla F(W_t)\|_{\mathrm{tr}}
\le
\Delta_1
+
2\eta\left(
\frac{\mathbb E[e_1]}{1-\beta}
+
N\frac{\beta L\eta(1+\bar\delta)}{1-\beta}
+
N\rho \sigma \sqrt{1-\beta}
\right)
+
\frac{LN\eta^2}{2}(1+\bar\delta)^2.
\]

Finally, dividing both sides by \(N\eta(1-\bar\delta)\) yields
\[
\frac{1}{N}\sum_{t=1}^{N}\mathbb E\|\nabla F(W_t)\|_{\mathrm{tr}}
\le
\frac{1}{1-\bar\delta}
\left[
\frac{\Delta_1}{N\eta}
+
\frac{2\mathbb E[e_1]}{N(1-\beta)}
+
\frac{2\beta L\eta(1+\bar\delta)}{1-\beta}
+
2\rho \sigma \sqrt{1-\beta}
+
\frac{L\eta}{2}(1+\bar\delta)^2
\right].
\]
This proves the first part of the theorem.

For the stated rate, choose
\[
\eta = \Theta(N^{-3/4}),
\qquad
1-\beta = \Theta(N^{-1/2}).
\]
Then the five terms in the bracket scale as
\[
\frac{\Delta_1}{N\eta} = O(N^{-1/4}),
\qquad
\frac{2\mathbb E[e_1]}{N(1-\beta)} = O(N^{-1/2}),
\]
\[
\frac{2\beta L\eta(1+\bar\delta)}{1-\beta} = O(N^{-1/4}),
\qquad
2\rho \sigma \sqrt{1-\beta} = O(N^{-1/4}),
\]
and
\[
\frac{L\eta}{2}(1+\bar\delta)^2 = O(N^{-3/4}).
\]
The dominant terms are therefore of order \(N^{-1/4}\), and hence
\[
\frac{1}{N}\sum_{t=1}^{N}\mathbb E\|\nabla F(W_t)\|_{\mathrm{tr}}
=
O\!\left(\frac{1}{(1-\bar\delta)N^{1/4}}\right).
\]
This completes the proof.
\end{proof}